\newcommand{\indep}{\perp\!\!\!\perp}
\newtheorem{theorem}{Theorem}
\newtheorem{definition}{Definition}
\newtheorem{corollary}{Corollary}
\newtheorem{lemma}{Lemma}
\newtheorem{remark}{Remark}
\newcommand{\longvec}[1]{\vec{#1}}
\def\eqref#1{equation~\ref{#1}}
\def\1{\bm{1}}
\def\mA{{\bm{A}}}
\def\mI{{\bm{I}}}
\def\mX{{\bm{X}}}
\def\mY{{\bm{Y}}}
\def\mZ{{\bm{Z}}}
\DeclareMathAlphabet{\mathsfit}{\encodingdefault}{\sfdefault}{m}{sl}
\SetMathAlphabet{\mathsfit}{bold}{\encodingdefault}{\sfdefault}{bx}{n}
\newcommand{\tens}[1]{\bm{\mathsfit{#1}}}
\def\tA{{\tens{A}}}
\def\tE{{\tens{E}}}
\def\tX{{\tens{X}}}
\def\cO{{\mathcal{O}}}
\def\cP{{\mathcal{P}}}
\def\cS{{\mathcal{S}}}
\def\cZ{{\mathcal{Z}}}
\newcommand{\E}{\mathbb{E}}
\newcommand{\R}{\mathbb{R}}
\newcommand{\Appendix}{Appendix\xspace}
\newcommand{\nodeembeddings}{node embeddings\xspace}
\newcommand{\nodeembedding}{node embedding\xspace}
\newcommand{\JGamma}{\Gamma}
\newcommand{\JGammastar}{\Gamma^{\star}}
\newcommand{\colliders}{Colliding Neural Networks }
\newcommand*\dbar[1]{\overline{\overline{\lower0.2ex\hbox{$#1$}}}}
\title{On the Equivalence between Positional Node Embeddings and Structural Graph Representations}
\author{Balasubramaniam Srinivasan \\
Department of Computer Science \\
Purdue University \\
\texttt{bsriniv@purdue.edu}
\And
Bruno Ribeiro \\
Department of Computer Science \\
Purdue University \\
\texttt{ribeiro@cs.purdue.edu}
}
\begin{document}
\vspace{-30pt}
\maketitle
\vspace{-20pt}
\begin{abstract}
\vspace{-10pt}
This work provides the first unifying theoretical framework for node (positional) embeddings and structural graph representations, bridging methods like matrix factorization and graph neural networks.
Using invariant theory, we show that the relationship between structural representations and node embeddings is analogous to that of a distribution and its samples.
We prove that all tasks that can be performed by node embeddings can also be performed by structural representations and vice-versa.
We also show that the concept of transductive and inductive learning is unrelated to node embeddings and graph representations, clearing another source of confusion in the literature.
Finally, we introduce new practical guidelines to  generating  and  using  node  embeddings, which fixes significant shortcomings of standard operating procedures used today.
\end{abstract}

\vspace{-16pt}
\section{Introduction}
\vspace{-10pt}
The theory of  {\em structural} graph representations is a recently emerging field. 
It creates a link between relational learning and invariant theory.
Interestingly, or rather unfortunately, there is no unified theory connecting node embeddings ---low-rank matrix approximations, factor analysis, latent semantic analysis, etc.--- with structural graph representations.
Instead, conflicting interpretations have manifested over the last few years, that further confound practitioners and researchers alike.

For instance, consider the direction, {\em word embeddings $\to$ structural representations}, where the structural equivalence between {\em men $\to$ king} and {\em women $\to$ queen} is described as being obtained by just adding or subtracting their node embeddings (positions in the embedding space)~\citep{arora2016latent,mikolov2013distributed}.
{\em Hence, can all (positional) node embeddings provide structural relationships akin to word analogies?} 
We provide a visual example in \hyperref[sec:abstractView]{Appendix (\Cref{sec:abstractView})} using the food web of \Cref{fig:example}.
In the opposite direction,  
{\em structural representations $\to$ \nodeembeddings}, graph neural networks (GNNs) are often optimized to predict edges even though their structural node representations are provably incapable of performing the task. For instance, the node representations of the {\em lynx} and the {\em orca} in \Cref{fig:example} are indistinguishable due to an isomorphic equivalence between the nodes, making any edge prediction task that distinguishes the edges of {\em lynx} and {\em orca} a seemly futile exercise (see \hyperref[sec:abstractView]{\Appendix (\Cref{sec:abstractView})} for more details).
{\em Hence, are structural representations in general ---and GNNs in particular--- fundamentally incapable of performing link (dyadic) and multi-ary (polyadic) predictions tasks?}
GNNs, however, can perform node classification tasks, which is a task not associated with positional node embeddings (see \hyperref[sec:abstractView]{\Appendix (\Cref{sec:abstractView})} for a concrete visual interpretation of the differences between positional node embeddings and structural representations over node classification and link prediction tasks).

Confirmation bias has seemingly appeared to thwart recent efforts to bring \nodeembeddings and structural representations into a single overarching framework. Preconceived notions of the two being fundamentally different (see \hyperref[sec:abstractView]{\Appendix (\Cref{sec:abstractView})}) have been reinforced in the existing literature, arguing they belong in different applications: 
(Positional) Node embeddings would find applications in multi-ary relationships such as link prediction, clustering, and natural language processing and knowledge acquisition through word and entity embeddings.
Structural representations would find applications in node classification, graph classification, and role discovery. 
A unified theory is required if we wish to eliminate these artificial boundaries, and better cross-pollinate, node embeddings and structural representations in novel techniques.

{\em Contributions}: 
In this work we use invariant theory and axiomatic counterfactuals (causality) 
to develop a unified theoretical framework that clarifies the differences between \nodeembeddings and structural representations and emphasizes their correspondence.
More specifically, (a) we show that structural representations and  \nodeembeddings have the same relationship as distributions and their samples; (b) 
we prove that all tasks that can be performed by \nodeembeddings can also be performed by structural representations and vice-versa.
Moreover, (c) we introduce new guidelines to creating and using \nodeembeddings, which we hope will replace the less-than-optimal standard operating procedures used today.
Finally, (d) we show that the concepts of {\em transductive} and {\em inductive} learning ---commonly  used to describe relational methods--- are unrelated to node embeddings and structural representations.s

\begin{figure}[t!!!]
    \vspace{-25pt}
	\begin{center}
        \includegraphics[height=1.1in, width=4in]{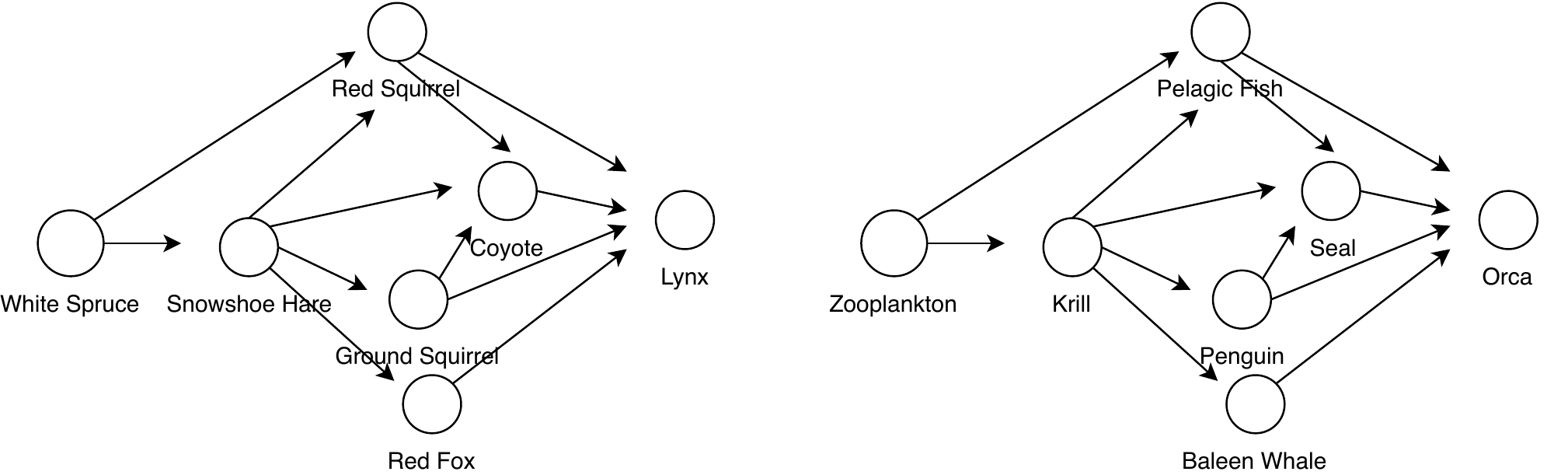}
		\vspace{-10pt}
	\end{center}
	\caption{\small A food web example showing two disconnected components - the boreal forest \citep{stenseth1997population} and  the antarctic fauna \citep{bates2015construction}. The positional node embedding of the lynx and the orca can be different while their structural representation must be the same (due to the isomorphism).}
	\label{fig:example}
	\vspace{-1.8em}
\end{figure}
\vspace{-15pt}
\section{Preliminaries}
\vspace{-10pt}
This section introduces some basic definitions, attempting to keep the mathematical jargon in check as much as we can, sometimes even sacrificing generality for clarity.
We recommend~\cite{bloem2019probabilistic} for a more formal description of some of the definitions in this section.

\begin{definition}[Graph]
We consider either a directed or an undirected attributed graph, denoted by $G\!=\!(V,E,\mX,\tE)$, where $V$ is a set of $n = |V|$ vertices, $E$ is the set of edges in $V \times V$, with matrix $\mX \in \R^{n \times k}, k > 0$ and 3-mode tensor $\tE \in \R^{n \times n \times k'}, k' > 0$ representing the node and edge features, respectively.
The edge set has an associated adjacency matrix $\mA \in \{0,1\}^{n\times n}$.
In order to simplify notation, we will  compress $\tE$ and $\mA$ into a single tensor $\tA \in \R^{n \times n \times (k'+1)}$.
When explicit vertex and edge features and weights are unavailable, we will consider  $\mX=\mathbf{1}\mathbf{1}^T$ and $\tA=\mA$, where $\mathbf{1}$ is a $n \times 1$ vector of ones.
We will abuse notation and denote the graph as $G=(\tA,\mX)$.
Without loss of generality, we define number the nodes in $V=\{1,\ldots,n\}$ following the same ordering as the adjacency tensor $\tA$ and the rows in $\mX$.
We denote $\vec{S}$ and a vector of the elements in $S \in \cP^\star(V)$ sorted in ascending order, where $\cP^\star(V)$ is the power set of $V$ without the empty set.
\end{definition}

One of the most important operators in our mathematical toolkit will be that of a permutation action, orbits, $\mathcal{G}$-invariance, and $\mathcal{G}$-equivariance:
\begin{definition}[Permutation action $\pi$]
A permutation action $\pi$ is a  function that acts on any vector, matrix, or tensor defined over the nodes $V$, e.g., $(\mZ_i)_{i \in V}$, and outputs an equivalent vector, matrix, or tensor with the order of the nodes permuted.
We define $\Pi_n$ as the set of all $n!$ such permutation actions.
\end{definition}

\begin{definition}[Orbits]
An orbit is the result of a group action $\Pi_n$ acting on elements of a group correspond to bijective transformations of the space that preserve some structure of the space.
The orbit of an element  is the set of equivalent elements under action $\Pi_n$, i.e.,
$\Pi_n(x)=\left\{\pi(x) \mid \pi\in \Pi_n\right\}.$
\end{definition}

\begin{definition}[$\mathcal{G}$-equivariant and $\mathcal{G}$-invariant functions]
Let $\Sigma_n$ be the set of all possible attributed graphs $G$ of size $n \geq 1$. More formally, $\Sigma_n$ is the set of all tuples $(\tA,\mX)$ with adjacency tensors $\tA$ and corresponding node attributes $\mX$ for $n$ nodes.
A function $g:\Sigma_n \to \R^{n\times \cdot}$ is $\mathcal{G}$-equivariant w.r.t.\ valid permutations of the nodes $V$, whenever any permutation action $\pi \in \Pi_n$ in the $\Sigma_n$  space associated with the same permutation action of the nodes in the $\R^{n\times \cdot}$ space.
A function $g:\Sigma_n \to \R^{\cdot}$ is $\mathcal{G}$-invariant whenever it is invariant to any permutation action $\pi \in \Pi_n$ in $\Sigma_n$.
\end{definition}

\begin{definition}[Graph orbits \& graph isomorphism]\label{def:graphiso}
Let $G=(\tA,\mX)$ be a graph with $n$ nodes, and let $\Pi_n(G) = \{(\tA',\mX') : (\tA',\mX') = (\pi(\tA),\pi(\mX)), \forall \pi \in \Pi_n\}$ be the set of all equivalent (isomorphic) graphs under the permutation action $\pi$.
Two graphs $G_1=(\tA_1,\mX_1)$ and $G_2=(\tA_2,\mX_2)$ are said isomorphic iff $\Pi_n(G_1)=\Pi_n(G_2)$.
\end{definition}

\begin{definition}[Node orbits \& node isomorphism]\label{def:nodeiso}
The equivalence classes of the vertices of a graph $G$ under the
action of automorphisms are called vertex orbits.
If two nodes are in the same node orbit, we say that they are isomorphic.
\end{definition}
In \Cref{fig:example}, the lynx and the orca are isomorphic (they have the same node orbits).
We now generalize \Cref{def:nodeiso} to subsets of nodes $S \in \cP^\star(V)$, where $\cP^\star(V)$ is the power set of $V$ without the empty set.
\begin{definition}[Vertex subset orbits and joint isomorphism]\label{def:jointiso}
The equivalence classes of $k$-sized subsets of vertices  $S \in \cP^\star(V)$ of a graph $G$ under the action of automorphisms between the subsets are called vertex subset orbits, $k \geq 2$.
If two proper subsets $S_1,S_2 \in \cP^\star(V)\backslash V$ are in the same vertex subset orbit, we say they are jointly isomorphic.

%
\end{definition}






Next we define the relationship between structural representations and node embeddings.

\vspace{-5pt}
\section{A Unifying Theoretical Framework of Node Embeddings and Structural Representations}
\label{sec:theory}
\vspace{-10pt}
%
%
%
%





{\em How are node embeddings and structural representations related?} 
This section starts with a familiar, albeit na\"ive, view of the differences between node embeddings and structural representations, preparing the groundwork to later broadening and rectifying these into precise model-free mathematical statements using invariant theory. This broadening is needed since model-free node embeddings need not be related to {\em node closeness} in the graph (or to lower dimensional projections for that matter), as it is impossible to have a model-free definition of {\em closeness}.

{\em A familiar interpretation of node embeddings:}
Node embeddings are often seen as a lower-dimensional projection of the rows and columns of the adjacency matrix $\mA$ from $\R^n$ to $\R^d$, $d < n$, that preserves relative positions of the nodes in a graph~\citep{graham1985isometric,linial1995geometry}; for instance, in  Figure~\ref{fig:example}, the lynx and the coyote would have close \nodeembeddings, while the \nodeembeddings of the lynx and the orca would be significantly different. 
Node embeddings are often seen as encoding the fact that the lynx and the coyote are part of a tightly-knit community, while the lynx and orca belong to distinct communities.
The {\em structural representation} of a node, on the other hand, shows which nodes have similar roles (structural similarities) on a graph; for instance, the lynx and the orca in Figure~\ref{fig:example} must have the same structural representation, while the lynx and the coyote likely have different structural representations.
The lynx, like the orca, is a top predator in the food web while the coyote is not a top predator.


{\em The incompatibility of the familiar interpretation with the theory of structural graph representations:}
The above interpretation of \nodeembeddings must be tied to a model that defines {\em closeness}.
Structural graph representations are model-free. 
Hence, we need a model-free definition of node embedding to connect it with structural representations.
Unfortunately, one cannot define {\em closeness} without a model. Hence, in the remainder of this paper, we abandon this familiar interpretation in favor of a model-free definition.

{\em Roadmap:} In what follows, we restate some existing model-free definitions of {\em structural graph representation} and introduce some new ones.
Then, we introduce a model-free definition of \nodeembeddings.
We will retain the terminology {\em \nodeembedding} for historical reasons, even though our {\em \nodeembedding} need not be an embedding (a projection into lower dimensional space).
\vspace{-5pt}
\subsection{On Structural Representations}
\vspace{-5pt}
In what follows we use the terms link and edge interchangeably. Proofs are left to the \Appendix.

\begin{definition}[Structural node representations]\label{def:struc}
The structural representation of node $v \in V$ in a graph $G=(\tA,\mX)$ is the $\mathcal{G}$-invariant representation  $\JGamma(v,\tA,\mX)$, where $\JGamma:V \times \Sigma_n \to \mathbb{R}^{d}$, $d \geq 1$, such that $\forall u \in V$, $\JGamma(u,\tA,\mX) = \JGamma(\pi(u),\pi(\tA),\pi(\mX))$ for all permutation actions $\forall \pi \in \Pi_n$.
Moreover, for any two isomorphic nodes $u,v \in V$, $\JGamma(u,\tA,\mX)=\JGamma(v,\tA,\mX).$
\end{definition}

\begin{definition}[Most-expressive structural node representations $\JGammastar$]\label{prop:mestruc}
A structural representation of a node $v \in V$, $\JGammastar(v,\tA,\mX)$, is most-expressive iff,  $\forall u \in V$, 
 there exists a bijective measurable map between $\JGammastar(u,\tA,\mX)$ and the orbit of node $u$ in $G=(\tA,\mX)$ (\Cref{def:jointiso}).
\end{definition} 
%
%
%
%
Trivially, by \Cref{prop:mestruc,def:nodeiso}, two graphs $G_1 = (\tA_1,\mX_1)$ and $G_2 = (\tA_2,\mX_2)$ are isomorphic (\Cref{def:graphiso}) iff the most-expressive structural node representations $(\JGammastar(u,\tA_1,\mX_1))_{u\in V}$ and $(\JGammastar(v,\tA_2,\mX_2))_{v \in V}$ are the same up to a valid permutation $\pi \in \Pi_n$ of the nodes.
In what follows $\cP^\star$ is the power set excluding the empty set.

We now describe the relationship between structural node representations and node isomorphism.

%
%
\begin{lemma}\label{lem:nodeisorep}
Two nodes $v,u \in V$, have the same most-expressive structural representations $\JGammastar(v,\tA,\mX) = \JGammastar(u,\tA,\mX)$ iff $u$ and $v$ are isomorphic nodes in $G=(\tA,\mX)$.
\end{lemma}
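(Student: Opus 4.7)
The plan is to unpack the two definitions at play --- namely that $\JGammastar$ is $\mathcal{G}$-invariant (inherited from Definition~\ref{def:struc}) and that by Definition~\ref{prop:mestruc} there is a bijective measurable map between $\JGammastar(u,\tA,\mX)$ and the orbit of $u$ under the automorphism group of $G$ --- and then to verify each direction of the biconditional separately. The proof is essentially a direct chase through definitions; I expect no genuine obstacle, only the need to be careful about what ``bijection with the orbit'' means, since the orbit of $u$ is a set (an equivalence class in $V$) while $\JGammastar(u,\tA,\mX)$ is a single value in $\mathbb{R}^d$.

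For the $(\Leftarrow)$ direction, I would assume $u$ and $v$ are isomorphic nodes in $G=(\tA,\mX)$. By Definition~\ref{def:nodeiso} this means there exists an automorphism of $G$, i.e.\ a permutation action $\pi\in\Pi_n$ with $\pi(\tA)=\tA$ and $\pi(\mX)=\mX$, such that $\pi(u)=v$. Applying the $\mathcal{G}$-invariance property of $\JGammastar$ inherited from Definition~\ref{def:struc},
\[
\JGammastar(u,\tA,\mX)=\JGammastar(\pi(u),\pi(\tA),\pi(\mX))=\JGammastar(v,\tA,\mX),
\]
which is what we wanted.

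For the $(\Rightarrow)$ direction, I would interpret the bijective measurable map of Definition~\ref{prop:mestruc} as follows: the value $\JGammastar(w,\tA,\mX)$ determines, and is determined by, the orbit $\mathcal{O}(w)\subseteq V$ of $w$ under the automorphism group of $G$. In particular, two nodes map to the same structural representation if and only if they label the same orbit. Thus if $\JGammastar(u,\tA,\mX)=\JGammastar(v,\tA,\mX)$, the bijection forces $\mathcal{O}(u)=\mathcal{O}(v)$, so $u$ and $v$ lie in the same vertex orbit, and by Definition~\ref{def:nodeiso} they are isomorphic.

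The only point worth emphasizing when writing this up cleanly is to state explicitly how ``bijective measurable map between $\JGammastar(u,\tA,\mX)$ and the orbit of $u$'' is used: one should read it as saying that the function $u\mapsto\JGammastar(u,\tA,\mX)$ factors through the orbit partition of $V$ and the induced map from orbits to representations is injective. Once that is noted, both directions collapse to one line each, and combining them yields the claimed biconditional. \qed
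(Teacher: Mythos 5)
Your proposal is correct and follows essentially the same route as the paper's proof: the ($\Leftarrow$) direction is the defining property that a structural representation (hence $\JGammastar$) assigns equal values to isomorphic nodes, and the ($\Rightarrow$) direction uses the bijection of Definition~\ref{prop:mestruc} between representations and node orbits, which the paper states as a contradiction and you state directly as injectivity of the orbit-to-representation map.
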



%
%
Having described representation of nodes, we now generalize these representations to subsets of $V$.

\begin{definition}[Joint structural representation $\JGamma$]\label{def:jointGamma}
A joint structural representation of a graph with node set $V$ is defined as $\JGamma: \cP^\star(V) \times \Sigma_n \to \R^d$, $d \geq 1$.
Furthermore, $\Gamma$ is $\mathcal{G}$-invariant over all node subsets, i.e., $\forall S \in \cP^\star(V)$ and $\forall (\tA,\mX) \in \Sigma_n$, it must be that $\JGamma(\vec{S},\tA,\mX) = \JGamma(\pi(\vec{S}),\pi(\tA),\pi(\mX))$ for all permutation actions $\forall \pi \in \Pi_n$.
Moreover, for any two isomorphic subsets $S,S' \in \cP^\star(V)$, $\JGamma(\vec{S},\tA,\mX)=\JGamma(\vec{S}',\tA,\mX).$

%
\end{definition}

We now mirror \Cref{prop:mestruc} in our generalization of $\JGamma$:
\begin{definition}[Most-expressive joint structural representations $\JGammastar$]\label{prop:mejointGamma}
A structural representation $\JGammastar(\vec{S},\tA,\mX)$ of a non-empty subset $S \in \cP^\star(V)$, of a graph  $(\tA,\mX) \in \Sigma_n$, is most-expressive iff,  
 there exists a bijective measurable map between $\JGammastar(\vec{U},\tA,\mX)$ and the orbit of $U$ in $G$ (\Cref{def:jointiso}), $\forall U \in \cP^\star(V)$ and $\forall (\tA,\mX)\in \Sigma_n$.
\end{definition}


%
%

Note, however, the failure to represent the link ({\em lynx}, {\em coyote}) in \Cref{fig:example} using the most-expressive node representations of the {\em lynx} and the {\em coyote}.
A link needs to be represented by a joint representation of two nodes.
For instance, we can easily verify from \Cref{prop:mejointGamma} that $\JGammastar((\text{\em lynx,coyote}),\tA,\mX) \neq \JGammastar((\text{\em orca,coyote}),\tA,\mX)$, even though $\JGammastar(\text{\em lynx},\tA,\mX) = \JGammastar(\text{\em orca},\tA,\mX)$.

Next we show that joint prediction tasks only require joint structural representations.  But first we need to show that any causal model defined through axiomatic counterfactuals~\citep{galles1998axiomatic} can be equivalently defined through noise outsourcing~\citep{austin2008exchangeable}, a straightforward result that we were unable to find in the literature.

\begin{lemma}[Causal modeling through noise outsourcing] \label{lem:causal}
Definition~1 of \citet{galles1998axiomatic} gives a causal model as a triplet 
$$
M = \langle U, V', F \rangle,
$$
where $U$ is a set of exogenous variables, $V'$ is a set of endogenous variables, and $F$ is a set of functions, such that in the causal model, $v'_i = f(\longvec{p\!a}_{i}, u)$ is the realization of random variable $V'_i \in V'$ and a sequence of random variables $\longvec{P\!\!A}_i$ with ${P\!\!A}_i \subseteq V\backslash V'_i$ as the endogenous variable parents of variable $V'_i$ as given by a directed acyclic graph.
Then, there exists a pure random noise $\epsilon$ and a set of (measurable) functions $\{g_u\}_{u \in U}$ such that for $V_i \in V'$, $V'_i$ can be equivalently defined as $v'_i \stackrel{a.s.}{=} f(\vec{p\!a}_i, g_u(\epsilon_u))$, where $\epsilon_u$ has joint distribution $(\epsilon_{u})_{\forall u \in U} \stackrel{a.s.}{=} g'(\epsilon)$ for some Borel measurable function $g'$ and a random variable $\epsilon \sim \text{Uniform}(0,1)$. The latter defines $M$ via noise outsourcing~\citep{austin2008exchangeable}.  
\end{lemma}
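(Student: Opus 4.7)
The plan is to invoke the classical noise outsourcing (transfer) theorem from measure-theoretic probability, applied to the joint distribution of the exogenous variables of the causal model $M$, and then verify that the resulting representation matches the form claimed in the statement. The measure-theoretic setting I would fix first: the exogenous family $(U_u)_{u \in U}$ is a random element taking values in the product space $\prod_{u \in U} \mathcal{U}_u$, and I will assume (as is implicit in both the Galles--Pearl and Austin formulations) that each $\mathcal{U}_u$ is a standard Borel space, so that the product --- when $U$ is countable, or more generally under tightness of the joint law --- is again standard Borel and hence measurably isomorphic to a Borel subset of $[0,1]$.

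The core step is the noise outsourcing theorem: for any random element $X$ valued in a standard Borel space, there exists a Borel measurable $h:[0,1] \to \mathcal{X}$ with $h(\epsilon) \stackrel{d}{=} X$ when $\epsilon \sim \text{Uniform}(0,1)$. Applied to $X = (U_u)_{u \in U}$, this yields a Borel measurable $g': [0,1] \to \prod_u \mathcal{U}_u$ with $g'(\epsilon) \stackrel{d}{=} (U_u)_{u \in U}$. I would let $(\epsilon_u)_{u \in U}$ denote the coordinate projections of $g'(\epsilon)$, and take each $g_u$ to be the canonical Borel identification on $\mathcal{U}_u$ (alternatively, one may split $g'$ further into a map producing $\epsilon_u \in [0,1]$ followed by per-coordinate Borel realizations of the marginals, given by the one-dimensional case of the same theorem). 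Then $(g_u(\epsilon_u))_{u \in U} \stackrel{d}{=} (U_u)_{u \in U}$, and feeding this into the structural equations $v'_i = f(\vec{p\!a}_i, u_i)$ recovers $v'_i \stackrel{a.s.}{=} f(\vec{p\!a}_i, g_u(\epsilon_u))$ inductively along the topological order of the DAG.

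The almost-sure equality in the conclusion requires moving to a common probability space carrying both the original $(U_u)_u$ and the outsourced $g'(\epsilon)$; the standard coupling/transfer argument supplies such a space on which the two processes coincide almost surely, and this equality is then propagated through the deterministic structural maps along the DAG. The main obstacle, I expect, is bookkeeping rather than a deep technical step: one must ensure the a.s.\ coupling is preserved under the acyclic composition of the structural equations so that discrepancies do not accumulate across the DAG, and handle a possibly uncountable index set $U$ by reducing to the separable / Polish case on which the transfer theorem applies. Everything else is an application of a well-known tool and a routine identification of the outsourced representation with the claimed one.
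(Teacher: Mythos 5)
Your proposal is correct and follows essentially the same route as the paper's proof: apply the noise outsourcing / transfer theorem (Austin's Lemma~3.1, Kallenberg Theorem~6.10) to the joint distribution of the exogenous variables to obtain $(\epsilon_u)_{u\in U} \stackrel{a.s.}{=} g'(\epsilon)$ with $\epsilon \sim \text{Uniform}(0,1)$, take each $g_u$ to be (essentially) the identity, and push the result through the structural equations. Your added care about the coupling needed to upgrade distributional equality to almost-sure equality and about the standard Borel assumptions is sound bookkeeping that the paper's terser argument leaves implicit, but it is the same decomposition and the same key lemma.
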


The proof of \Cref{lem:causal} is given in the \Appendix.
\Cref{lem:causal} defines the causal model entirely via endogenous variables, deterministic functions, and a pure random noise random variable.

We are now ready for our theorem showing that joint prediction tasks only require (most-expressive) joint structural representations.

\begin{theorem}\label{thm:jointpred}
Let $\cS \subseteq \cP^\star(V)$ be a set of non-empty subsets of the vertices $V$.
Let $\mY(\cS,\tA,\mX) = (Y(\vec{S},\tA,\mX))_{S \in \cS}$ be a sequence of random variables defined over the sets $S \in \cS$ of a graph $G=(\tA,\mX)$, that are invariant to the ordering of $\vec{S}, S \in \cS$, such that 
$Y(\vec{S_1},\tA,\mX) \stackrel{d}{=} Y(\vec{S_2},\tA,\mX)$ for any two jointly isomorphic subsets $S_1,S_2 \in \cS$ (\Cref{def:jointiso}), where $\stackrel{d}{=}$ means equality in their marginal distributions.
Then, there exists a measurable function $\varphi $ such that, $\mY(\cS,\tA,\mX) \stackrel{\text{a.s.}}{=} (\varphi (\JGammastar(\vec{S},\tA,\mX),\epsilon_S))_{S \in \cS}$,  where $\epsilon_S$ is the random noise that defines the exogenous variables of \Cref{lem:causal}, with joint distribution $p((\epsilon_{S'})_{\forall S' \in \cS})$ independent of $\tA$ and $\mX$.
\end{theorem}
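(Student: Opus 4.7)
The plan is to first reduce the theorem to a marginal identifiability claim and then lift it to the joint representation via \Cref{lem:causal}. By hypothesis, $Y(\vec{S_1},\tA,\mX) \stackrel{d}{=} Y(\vec{S_2},\tA,\mX)$ whenever $S_1,S_2 \in \cS$ are jointly isomorphic, and by \Cref{def:jointiso} two subsets are jointly isomorphic iff they lie in the same vertex-subset orbit of $G=(\tA,\mX)$. Hence the marginal law of $Y(\vec{S},\tA,\mX)$ depends on $S$ only through the orbit of $S$: there is a well-defined map $(\tA,\mX,[S]) \mapsto Q_{\tA,\mX,[S]}$ from (graph, subset orbit) pairs to probability measures on the codomain of $Y$.

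Next I would invoke the most-expressive property of \Cref{prop:mejointGamma}, which supplies a measurable bijection between the orbit of $S$ in $G$ and $\JGammastar(\vec{S},\tA,\mX)$. Composing with the previous step, there is a measurable map $r \mapsto Q_r$ sending any value $r$ of $\JGammastar$ to the corresponding marginal law of $Y$. A parametric Skorokhod/measurable-selection argument (e.g., Kallenberg's transfer lemma) then yields a single jointly measurable function $\varphi$ such that, for every $r$, $\varphi(r,U)$ with $U \sim \text{Uniform}(0,1)$ has law $Q_r$. Marginally, this already gives $Y(\vec{S},\tA,\mX) \stackrel{d}{=} \varphi(\JGammastar(\vec{S},\tA,\mX),U_S)$ with $U_S$ independent of $(\tA,\mX)$, and crucially the same $\varphi$ works for every $S \in \cS$ and every $(\tA,\mX) \in \Sigma_n$.

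To upgrade this marginal statement to the joint, almost-sure equality demanded by the theorem, I would build a causal model $M=\langle U,V',F\rangle$ in the sense of \Cref{lem:causal}: take endogenous variables $V'=\{Y(\vec{S},\tA,\mX):S \in \cS\}$, structural equations $Y(\vec{S},\tA,\mX)=\varphi(\JGammastar(\vec{S},\tA,\mX),\epsilon_S)$, and let the exogenous noises $(\epsilon_S)_{S \in \cS}$ carry whatever dependence is required to reproduce the true joint law of $\mY(\cS,\tA,\mX)$ (this is possible because for every $r$, $\varphi(r,\cdot)$ transports the uniform law to $Q_r$). \Cref{lem:causal} then outsources these exogenous variables onto a single scalar $\epsilon \sim \text{Uniform}(0,1)$ via a Borel map $g'$, with $(\epsilon_S)_{S \in \cS}\stackrel{a.s.}{=}g'(\epsilon)$; exogeneity built into the causal model ensures the resulting joint law of $(\epsilon_S)_{S \in \cS}$ does not depend on $(\tA,\mX)$. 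Assembling the pieces yields the claimed $\mY(\cS,\tA,\mX)\stackrel{\text{a.s.}}{=}(\varphi(\JGammastar(\vec{S},\tA,\mX),\epsilon_S))_{S \in \cS}$.

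The main obstacle is securing a single function $\varphi$ that (i) works across all subsets $S$, all orbits, and all graphs $(\tA,\mX)$ simultaneously, and (ii) admits a joint noise construction over $\cS$ that is independent of the input graph. Obstacle (i) is handled by routing through $\JGammastar$ and invoking joint measurability of the Skorokhod transport; obstacle (ii) is exactly what \Cref{lem:causal} delivers, letting us replace a family of dependent exogenous variables by a deterministic transform of a single $\text{Uniform}(0,1)$ variable.
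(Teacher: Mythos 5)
Your proposal is essentially correct at the paper's level of rigor, but it takes a genuinely different route. The paper's proof works per subset: it invokes Kallenberg's transfer theorem to write each $Y(\vec{S},\tA,\mX)$ a.s.\ as $\phi(\vec{S},\tA,\mX,\epsilon)$, then symmetrizes $\phi$ over the isomorphism relating any two jointly isomorphic subsets (the $\text{iso}'$ construction) so that one function $\varphi'$ serves both, couples the noises of such pairs via Kallenberg's Corollary~6.11 through a measure-preserving map $(\epsilon_1,\epsilon_2)=f(\epsilon)$, and only at the very end uses the bijection of \Cref{prop:mejointGamma} to replace the arguments $(\vec{S},\tA,\mX)$ by $\JGammastar(\vec{S},\tA,\mX)$. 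You instead factor the marginal law through the orbit first (well defined by the $\stackrel{d}{=}$ hypothesis), push it through the bijection of \Cref{prop:mejointGamma} to obtain a kernel $r\mapsto Q_r$, and realize that kernel by a single jointly measurable transport $\varphi(r,U)$; the isomorphism-invariance that the paper enforces by hand is automatic in your construction because $\varphi$ only sees the representation value. For the joint, a.s.\ statement you assemble a causal model and let \Cref{lem:causal} collapse the exogenous noises into a function of a single uniform variable, where the paper instead leans on Corollary~6.11 pairwise. Your route buys a cleaner separation of the two difficulties (identifiability of the marginal law from $\JGammastar$ versus realization of the joint dependence) and matches the theorem's own phrasing that $\epsilon_S$ is the exogenous noise of \Cref{lem:causal}. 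Two caveats: first, your claim that the same $\varphi$ works across all $(\tA,\mX)\in\Sigma_n$ needs cross-graph consistency of the law of $Y$ that the stated hypotheses (which are per-graph) do not literally provide, although this is harmless since the theorem fixes one graph. Second, the step you assert rather than prove is that the joint law of $(\epsilon_S)_{S\in\cS}$ reproducing the true joint law of $\mY$ through the pre-fixed coordinatewise transports can be taken independent of $(\tA,\mX)$ while also giving a.s.\ (not merely in-distribution) equality; the distributional-transform construction gives the a.s.\ coupling, but its copula is inherited from the conditional law of $\mY$ given the graph, so graph-independence is really part of the causal-model framing rather than a consequence. The paper's own proof is no more explicit on this point, so this is a shared soft spot rather than a defect specific to your argument.
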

%
%
\Cref{thm:jointpred} extends Theorem 12 of \cite{bloem2019probabilistic} in multiple ways: (a) to all subsets of nodes, $S \in \cP^\star(V)$, (b) to include causal language, and, most importantly, (c) to showing that any prediction task that can be defined over $S$, requires only a most-expressive joint structural representation over $S$.
For instance, any task with $|S| = 2$ predicting a missing link $(u,v)$ on a graph $G=(\tA,\mX)$, requires only the most-expressive structural representation $\JGammastar((u,v),\tA,\mX)$.
Note that, in order to predict directed edges, we must use $\JGammastar((u,v),\tA,\mX)$ to also predict the edge's direction: $\to$, $\leftarrow$, $\leftrightarrow$, but
a detailed procedure showing how to predict directed edges is relegated to a future journal version of this paper.
\Cref{thm:jointpred} also includes node tasks for $|S|=1$, hyperedge tasks for $2 < |S| < n$, and graph-wide tasks for $S=V$.

\begin{remark}[GNNs and link prediction]
Even though structural node representations of GNNs are not able to predict edges, GNNs are often still optimized to predict edges (e.g.,~\citep{hamilton2017inductive,xu2018powerful}) in transfer learning tasks.
This optimization objective guarantees that any small topological differences between two nearly-isomorphic nodes without an edge will be amplified, while differences between nodes with an edge will be minimized.
Hence, the topological differences in a close-knit community will be minimized in the representation.
This procedure is an interesting way to introduce homophily in structural representations and should work well for node classification tasks in homophilic networks (where node classes tend to be clustered).
\end{remark}

%
We now turn our attention to \nodeembeddings and their relationship with joint representations.
%
\subsection{On (Positional) Node Embeddings}
\vspace{-5pt}
\begin{definition}[Node Embeddings]\label{def:pos}
The node embeddings of a graph $G=(\tA,\mX)$ are defined as joint samples of random variables $(\mZ_i)_{i\in V}|\tA,\mX \sim p(\cdot |\tA,\mX)$, $\mZ_i \in \mathbb{R}^{d}$, $d \geq 1$, where $p(\cdot|\tA,\mX)$ is a $\mathcal{G}$-equivariant probability distribution on $\tA$ and $\mX$, that is, $\pi(p(\cdot|\tA,\mX)) = p(\cdot|\pi(\tA),\pi(\mX))$ for any permutation $\pi \in \Pi_n$.
\end{definition} 
Essentially, \Cref{def:pos} says that the probability distribution $p(\mZ|\tA,\mX)$ of a node embedding $\mZ$ must be $\mathcal{G}$-equivariant on $\tA$ and $\mX$. 
This is the only property we require to define a node embedding.
Next, we show that the node embeddings given by \Cref{def:pos} cover a wide range of embedding methods in the literature.
\begin{corollary}\label{cor:factor}
The node embeddings in \Cref{def:pos} encompass embeddings given by matrix and tensor factorization methods ---such as Singular Value Decomposition (SVD), Non-negative Matrix Factorization (NMF), implicit matrix factorization (a.k.a.\ word2vec)--, latent embeddings given by Bayesian graph models ---such as Probabilistic Matrix Factorizations (PMFs) and variants---, variational autoencoder methods and graph neural networks that use random lighthouses to extract node embedddings.
\end{corollary}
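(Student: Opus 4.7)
The plan is to verify, family by family, that each embedding procedure induces a conditional distribution $p(\mZ\mid \tA,\mX)$ that commutes with node permutations in the sense of \Cref{def:pos}. Since equivariance only constrains the joint distribution and not individual realizations, the argument has the common form: write the method as a (possibly randomized) map $\mZ = F(\tA,\mX,\xi)$ where $\xi$ is auxiliary randomness (priors, initializations, lighthouses, sign/rotation choices in degenerate subspaces), and check that $F(\pi(\tA),\pi(\mX),\xi)\stackrel{d}{=}\pi(F(\tA,\mX,\xi))$ once $\xi$ is integrated out. The rest is bookkeeping separated into three cases.

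\textbf{Matrix/tensor factorization methods.} For SVD, $\mA = \mU\mSigma \mV^{T}$ implies $\mP_{\pi}\mA = (\mP_{\pi}\mU)\mSigma\mV^{T}$, so the left-factor embedding transforms as $\mU\mapsto \mP_{\pi}\mU$, giving deterministic equivariance up to the usual sign/rotation ambiguity on repeated singular values. Assigning that ambiguity a uniform prior (the natural Haar measure on the relevant orthogonal group) yields a distribution $p(\mZ\mid \mA)$ that is exactly equivariant. The same commutation argument handles NMF (after pushing nonnegativity into the prior), any tensor factorization of $\tA$, and implicit matrix factorization (word2vec): for the latter, one either invokes the closed-form PPMI-SVD characterization of \citet{} style or notes that SGD training from a permutation-invariant initialization on a permutation-equivariant objective produces an equivariant iterate law.

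\textbf{Bayesian latent-variable models.} For PMF and its variants, the model posits priors $p(\mZ)$ that are exchangeable across rows (hence $\Pi_{n}$-invariant) and a likelihood $p(\tA,\mX\mid \mZ)$ that is jointly $\mathcal{G}$-equivariant under relabeling of the latent rows and the observed rows/columns. Bayes' rule then gives $p(\mZ\mid \tA,\mX) \propto p(\mZ)\,p(\tA,\mX\mid \mZ)$, and a direct substitution $\mZ\to \pi(\mZ)$, $(\tA,\mX)\to(\pi(\tA),\pi(\mX))$ shows the posterior is equivariant. The same reasoning applies to VAE-type encoders whose architecture is built from row-equivariant layers: composition preserves equivariance, and sampling from the encoder is equivariant because the Gaussian noise is i.i.d.\ per row.

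\textbf{GNNs with random lighthouses.} A deterministic message-passing GNN produces a $\mathcal{G}$-equivariant function, so the raw GNN embeddings coincide (up to $\sigma$-algebra) with a structural representation. The lighthouse construction injects a random coloring $\xi$ of the nodes drawn from an exchangeable law; conditional on $\xi$, the augmented GNN is deterministic, and after marginalizing $\xi$, equivariance of the coloring distribution together with equivariance of the architecture gives $p(\mZ\mid \tA,\mX)$ in the sense of \Cref{def:pos}. This is precisely the mechanism by which GNNs escape the purely structural regime and populate the positional regime.

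\textbf{Main obstacle.} The one nontrivial point across all cases is what I have been sweeping into ``auxiliary randomness $\xi$'': spectral methods need a careful treatment of degenerate eigenspaces (Haar measure on the stabilizer), training-based methods need the argument that a permutation-equivariant optimization procedure with exchangeable initialization produces an equivariant iterate at every step, and GNN lighthouses need the exchangeability of the coloring distribution. Each is standard, but making the statement uniform requires packaging all of them through a single noise-outsourcing representation (as in \Cref{lem:causal}) so that equivariance of the induced law follows from equivariance of the deterministic kernel acting on an exchangeable noise source.
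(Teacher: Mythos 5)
Your proposal is correct and takes essentially the same route as the paper: a family-by-family check that each method induces a conditional law $p(\mZ\mid\tA,\mX)$ that is $\mathcal{G}$-equivariant (explicit posterior equivariance for PMF-type Bayesian models, equivariance of the encoder/mean-field law for VAEs, and marginalization over the random anchor/lighthouse choice for PGNN-style methods). The only minor divergence is the factorization subcase, where the paper obtains equivariance by randomly permuting the algorithm's input (and seed) so that even node-id-dependent implementations become equivariant in distribution, whereas you symmetrize the degenerate-subspace/sign ambiguity with a Haar prior; both devices yield the same conclusion.
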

The proof of \Cref{cor:factor} is in the \Appendix, along with the references of each of the methods mentioned in the corollary.
The output of some of the methods described in \Cref{cor:factor} is deterministic, and for those,  the probability density $p(\mZ | \tA,\mX)$ is a Dirac delta.
In practice, however, even deterministic methods use algorithms whose outputs depend on randomized initial conditions, which will also satisfy \Cref{cor:factor}.

We now show that permutation equivariance implies two isomorphic nodes (or two subsets of nodes) must have the same marginal distributions over $\mZ$:
\begin{lemma}\label{lem:isoZ}
The permutation equivariance of $p$ in \Cref{def:pos} implies that, if two proper subsets of nodes $S_1, S_2 \in \cP^\star(V)\backslash V$ are isomorphic, then their marginal node embedding distributions must be the same up to a permutation, i.e., $p((\mZ_i)_{i\in S_1} |\tA,\mX)=\pi(p((\mZ_j)_{j\in S_2} |\tA,\mX))$ for some appropriate permutation $\pi \in \Pi_n$.
\end{lemma}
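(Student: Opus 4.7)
The plan is to produce the required permutation by exhibiting an automorphism of the graph that simultaneously maps $S_1$ to $S_2$, and then to push this automorphism through the equivariance condition on $p$ and take marginals. First, I would unpack Definition~5: since $S_1$ and $S_2$ lie in the same vertex-subset orbit, there exists an automorphism $\pi^{\ast} \in \Pi_n$ of $G=(\tA,\mX)$ (that is, $\pi^{\ast}(\tA) = \tA$ and $\pi^{\ast}(\mX) = \mX$) whose image of $S_1$ as a set equals $S_2$. Let $\sigma$ denote the induced bijection $\sigma : S_1 \to S_2$ obtained by restricting $\pi^{\ast}$.

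Next, I would apply the $\mathcal{G}$-equivariance from Definition~8 at $\pi^{\ast}$, obtaining
\[
\pi^{\ast}\!\bigl( p(\,\cdot\, \mid \tA,\mX) \bigr) \;=\; p(\,\cdot\, \mid \pi^{\ast}(\tA), \pi^{\ast}(\mX)) \;=\; p(\,\cdot\, \mid \tA,\mX),
\]
where the second equality uses that $\pi^{\ast}$ is an automorphism of $G$. In sample form, for $\mZ = (\mZ_i)_{i \in V} \sim p(\,\cdot\,\mid \tA,\mX)$ the tuple $\pi^{\ast}(\mZ)$ has the same joint distribution as $\mZ$, conditioned on $\tA,\mX$.

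Then I would marginalize. Since marginalization commutes with relabeling of coordinates, reading off the coordinates indexed by $S_2$ on both sides of the distributional equality $\pi^{\ast}(\mZ) \stackrel{d}{=} \mZ$ yields
\[
p\!\bigl( (\mZ_j)_{j \in S_2} \mid \tA,\mX \bigr) \;=\; \pi^{\ast}\!\Bigl( p\!\bigl( (\mZ_i)_{i \in S_1} \mid \tA,\mX \bigr) \Bigr),
\]
because the coordinate at position $j \in S_2$ of $\pi^{\ast}(\mZ)$ is $\mZ_{(\pi^{\ast})^{-1}(j)} = \mZ_{\sigma^{-1}(j)}$, which belongs to the $S_1$-indexed block. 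Up to inverting the action (which is itself a permutation), this gives the conclusion of the lemma with $\pi = \pi^{\ast}$ (or its relevant restriction/extension in $\Pi_n$).

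The main obstacle is bookkeeping rather than any deep mathematical difficulty: the equivariance of $p$ is stated as an equivariance of the joint law on all of $V$, whereas the lemma concerns only marginals over strict subsets (hence the exclusion $\cP^\star(V) \setminus V$, which avoids degenerate cases). One must verify explicitly that the permutation $\pi^{\ast}$ of $V$ descends to the correct bijection between the two marginals, and that taking marginals commutes with the action of $\pi^{\ast}$. This step essentially follows from viewing $\pi^{\ast}(\mZ)_j = \mZ_{(\pi^{\ast})^{-1}(j)}$, but is worth writing out carefully to avoid conflating node indices with the random vectors they label.
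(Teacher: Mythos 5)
Your proposal is correct and follows essentially the same route as the paper's proof: both extract from the joint isomorphism of $S_1$ and $S_2$ an automorphism (permutation) of $G$ carrying $S_1$ to $S_2$, invoke the $\mathcal{G}$-equivariance of $p$ (which, since the automorphism fixes $(\tA,\mX)$, gives invariance of the joint law of $\mZ$), and then marginalize over the complementary coordinates. Your write-up is simply more explicit about the automorphism and the coordinate bookkeeping than the paper's terse argument, but there is no substantive difference in approach.
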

Hence, the critical difference between the structural node representation vector $(\JGamma(v,\tA,\mX))_{v \in V}$ in \Cref{def:struc} and node embeddings $\mZ$ in \Cref{def:pos}, is that the vector $(\JGamma(v,\tA,\mX))_{v \in V}$ {\em must be} $\mathcal{G}$-equivariant while $\mZ$ {\em need not be} ---even though $\mZ$'s distribution must be $\mathcal{G}$-equivariant.
This seemly trivial difference has tremendous consequences, which we explore in the reminder of this section. 
Next, we show that node embeddings $\mZ$ cannot have any extra information about $G$ that is not already contained in a most-expressive structural representation $\JGammastar$.

\begin{theorem}[The statistical equivalence between node embeddings and structural representations] \label{thm:posleqstruc}
Let $\mY(\cS,\tA,\mX) = (Y(\vec{S},\tA,\mX))_{S \in \cS}$ be as in \Cref{thm:jointpred}.
Consider a graph $G=(\tA,\mX) \in \Sigma_n$, $n\geq 2$.
Let $\JGammastar(\vec{S},\tA,\mX)$ be a most-expressive structural representation of nodes $S \in \cP^\star(V)$ in $G$.
Then,
$$Y(\vec{S},\tA,\mX) \indep_{\JGammastar(\vec{S},\tA,\mX)} \mZ |\tA,\mX, \quad \forall S \in \cS,$$ for any node embedding matrix $\mZ$ that satisfies \Cref{def:pos}, where $A \indep_B C$ means $A$ is independent of $C$ given $B$.
%
%
Finally, $\forall (\tA,\mX) \in \Sigma_n$, there exists a most-expressive node embedding $\mZ^\star|\tA,\mX$ such that,
$$\JGammastar(\vec{S},\tA,\mX) = \E_{\mZ^\star}[f^{(|S|)}((\mZ^\star_v)_{v \in S})|\tA,\mX], \quad \forall S \in \cS,$$ 
for some appropriate collection of functions $\{f^{(k)}(\cdot)\}_{k=1,\ldots,n}$. 
\end{theorem}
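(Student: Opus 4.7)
The plan is to dispatch the two assertions separately. For the conditional independence, I would invoke Theorem~\ref{thm:jointpred} to write $Y(\vec{S},\tA,\mX)\stackrel{\mathrm{a.s.}}{=}\varphi(\JGammastar(\vec{S},\tA,\mX),\epsilon_S)$ for a measurable $\varphi$, where $\epsilon_S$ is exogenous noise (Lemma~\ref{lem:causal}) whose joint law is independent of $(\tA,\mX)$. Since the randomness generating $\mZ\sim p(\cdot\mid\tA,\mX)$ shares no ancestor with $\epsilon_S$ beyond $(\tA,\mX)$, one has $\mZ\indep \epsilon_S\mid \tA,\mX$. Combined with the fact that $\JGammastar(\vec{S},\tA,\mX)$ is a deterministic function of $(\tA,\mX)$, the contraction rule of conditional independence then yields $Y(\vec{S},\tA,\mX)\indep \mZ\mid \JGammastar(\vec{S},\tA,\mX),\tA,\mX$.

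For the existence of $\mZ^\star$, my strategy is to explicitly construct it through a uniform random relabeling of the vertex set. Fix a Borel bijection $\phi:\{1,\dots,n\}\times\Sigma_n\to\R^d$ for $d$ large enough (each factor is a standard Borel space, so such $\phi$ exists). Sample a single $\sigma\sim\mathrm{Uniform}(\Pi_n)$ independent of $(\tA,\mX)$ and set
\begin{equation*}
\mZ^\star_v \;:=\; \phi\bigl(\sigma(v),\,\sigma(\tA),\,\sigma(\mX)\bigr),\qquad v\in V.
\end{equation*}
To verify Definition~\ref{def:pos}, I would check the $\mathcal{G}$-equivariance of $p(\cdot\mid\tA,\mX)$ by the change of variables $\sigma\mapsto \sigma\pi^{-1}$, which preserves the uniform law on $\Pi_n$ and gives $\pi(\mZ^\star)\mid\tA,\mX \stackrel{d}{=} \mZ^\star\mid \pi(\tA),\pi(\mX)$ for every $\pi\in\Pi_n$.

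For the recovery statement, define, for each $k=1,\dots,n$, the function $f^{(k)}((\vz_v)_{v\in S})$ as follows: decode each $\vz_v = \phi(\sigma(v),\sigma(\tA),\sigma(\mX))$ to read off the common pair $(\sigma(\tA),\sigma(\mX))$ and the set $\{\sigma(v):v\in S\}=\sigma(S)$, and then output $\JGammastar(\vec{\sigma(S)},\sigma(\tA),\sigma(\mX))$. By the $\mathcal{G}$-invariance of $\JGammastar$ (Definition~\ref{def:jointGamma}) applied to $\sigma$, this output equals the constant $\JGammastar(\vec{S},\tA,\mX)$ for every realization of $\sigma$, so the conditional expectation trivially equals this constant, as required. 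One also checks that $f^{(k)}$ depends only on the unordered multiset of inputs, since both the decoded graph and the set $\sigma(S)$ are permutation-invariant in the $k$ input tuples.

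The main obstacle is ensuring that a single $\mZ^\star$ simultaneously supports recovery for every $S\in\cS$ and every cardinality $|S|$, while still qualifying as a node embedding under Definition~\ref{def:pos}. The random labeling construction sidesteps this cleanly because each coordinate $\mZ^\star_v$ carries a copy of the entire relabeled graph, so any restriction to a subset of nodes retains full structural information up to the common relabeling $\sigma$. The only genuinely technical point is the measurable encoding $\phi$; this can be avoided altogether by allowing $\mZ^\star$ to take values in a Polish space and invoking standard lifts to $\R^d$, as is customary in the exchangeability-theoretic framework of \cite{bloem2019probabilistic} on which the rest of the paper builds.
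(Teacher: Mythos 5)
Your first part coincides with the paper's: both write $Y(\vec{S},\tA,\mX)\stackrel{a.s.}{=}\varphi(\JGammastar(\vec{S},\tA,\mX),\epsilon_S)$ via \Cref{thm:jointpred} and conclude via the noise-outsourcing characterization of conditional independence (the paper invokes Proposition~6.13 of \citet{kallenberg2006foundations}); like the paper, you implicitly assume the embedding's algorithmic randomness is independent of the task noise $\epsilon_S$ given $(\tA,\mX)$, which is the same level of rigor. For the second part your construction is correct but genuinely different from the paper's. The paper defines $\mZ^\star$ as uniform over $\{(\JGammastar(v,\tA,[\mX,\pi(1,\ldots,n)^T]))_{v\in V}\}_{\pi\in\Pi_n}$, i.e., each node receives its most-expressive \emph{node-level} structural representation in the id-augmented graph; recovering $\JGammastar(\vec{S},\tA,\mX)$ then needs real work --- a surjection from the multiset $\cO_S(\tA,\mX)$ of restricted representations onto the orbit of $S$, followed by the result of \citet{wagstaff2019limitations} that such multiset functions can be written as averages of a function $f^{(|S|)}$ --- so the expectation is doing something nontrivial. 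You instead let every coordinate $\mZ^\star_v=\phi(\sigma(v),\sigma(\tA),\sigma(\mX))$ carry an injective Borel encoding of the entire relabeled graph under a single uniform $\sigma$; your equivariance check via the change of variables $\sigma\mapsto\sigma\pi^{-1}$ is correct, and $f^{(k)}$ simply decodes and evaluates $\JGammastar$, whose $\mathcal{G}$-invariance makes the integrand constant in $\sigma$, so the expectation is trivial. What your route buys is an elementary argument that bypasses the multiset-decomposition machinery entirely, with only standard Borel-isomorphism bookkeeping left to check. What it gives up is the conceptual content of the paper's construction: there the node embedding is itself a structural representation of an id-augmented graph (mirroring RP-GNN and the Monte Carlo averaging used in the experiments), and the averaging over permutation samples is essential; in your construction each single node embedding already contains the whole graph, so the theorem is verified by a degenerate witness that says little about why aggregating many embedding samples recovers structural information.
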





The proof of \Cref{thm:posleqstruc} is given in the \Appendix. 
Note that the most-expressive embedding $\mZ^\star|\tA,\mX$ extends the insight used to make GNNs more expressive in~\cite{murphy2019relational} to a more general procedure.


\Cref{thm:posleqstruc} implies that, {\em for any graph prediction task, node embeddings carry no information beyond that of structural representations.} 
A less attentive reader may think this creates an apparent paradox, since one cannot predict a property $Y((\text{\em lynx,coyote}),\tA_\text{food web},\mX_\text{food web})$ in \Cref{fig:example} from structural node embeddings, since $\JGamma(\text{\em lynx},\tA,\mX) = \JGamma(\text{\em orca},\tA,\mX)$.
The resolution of the paradox is to note that \Cref{thm:posleqstruc} describes the prediction of a link through a pairwise structural representation $\Gamma((\text{\em lynx,coyote}),\tA_\text{food web},\mX_\text{food web})$, and  we may not be able to do the same task with structural node representations alone.
An interesting question for future work is how well can we learn distributions (representations) from (node embeddings) samples, extending~\citep{kamath2015learning} to graph representations.

Other equally important consequences of \Cref{thm:posleqstruc} are: (a) any sampling approach obtaining node embeddings $\mZ$ is valid as long as the distribution is $\mathcal{G}$-equivariant (\Cref{def:pos}), noting that isomorphic nodes must have the same marginal distributions (per \Cref{lem:isoZ}).
(b) Interestingly, convex optimization methods for matrix factorization can be seen as variance-reduction techniques with no intrinsic value beyond reducing variance.
(c) Methods that give unique node embeddings ---if the embedding of any two isomorphic nodes are different--- are provably incorrect when used to predict graph relationships since they are permutation-sensitive.

\begin{remark}[Some GNN methods give node embeddings not structural representations]
The random edges added by GraphSAGE~\citep{hamilton2017inductive} and GIN~\citep{xu2018powerful} random walks make these methods node embeddings rather than structural node representations, according to \Cref{def:pos}. To transform them back to  structural node representations, one must average over all such random walks.
\end{remark}

The following corollaries describe other consequences of \Cref{thm:posleqstruc}:
\begin{corollary}\label{cor:linkpred}
The link prediction task between any two nodes $u,v\in V$ depends only on the most-expressive tuple representation $\JGammastar((u,v),\tA,\tX)$.
Moreover, $\JGammastar((u,v),\tA,\tX)$ always exists for any graph $(\tA,\mX)$ and nodes $(u,v)$.
Finally, given most-expressive node embeddings $\mZ^\star$, there exists a function $f$ such that $\JGammastar((u,v),\tA,\tX) = \E_{\mZ^\star}[f(\mZ^\star_u,\mZ^\star_v)]$, $\forall u,v$.
\end{corollary}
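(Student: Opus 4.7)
The plan is to obtain the three claims of the corollary as direct specializations of the earlier results to the two-node case $|S|=2$.

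First, for the dependence claim, I would take $\cS=\{\{u,v\}\}$ and let $Y(\vec{\{u,v\}},\tA,\mX)$ be the indicator of the edge $(u,v)\in E$ (or, more generally, any link-prediction target). Since the existence of an edge between two vertices is a property of the graph that is preserved by any automorphism mapping the pair $\{u,v\}$ to a jointly isomorphic pair $\{u',v'\}$, the random variable $Y$ satisfies the invariance hypothesis of \Cref{thm:jointpred}. Applying \Cref{thm:jointpred} then yields a measurable $\varphi$ with $Y(\vec{\{u,v\}},\tA,\mX)\stackrel{a.s.}{=}\varphi(\JGammastar((u,v),\tA,\mX),\epsilon_{\{u,v\}})$, which is precisely the claim that link prediction depends only on $\JGammastar((u,v),\tA,\mX)$.

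Second, for existence of $\JGammastar((u,v),\tA,\mX)$, I would observe that \Cref{def:jointiso} partitions $\cP^\star(V)$ into finitely many vertex-subset orbits under the automorphism group of $G$. Restricting to 2-element subsets gives a finite collection of equivalence classes, and any injective measurable encoding of these equivalence classes (e.g.\ a canonical labeling of the orbit representatives) produces a function $\JGammastar:\{S\in\cP^\star(V):|S|=2\}\times\Sigma_n\to\R^d$ that is $\mathcal{G}$-invariant and bijectively maps each pair to its orbit, as required by \Cref{prop:mejointGamma}. Such encodings always exist since the orbit set is at most countable, so the construction works for every graph $(\tA,\mX)$.

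Third, for the representation via $\mZ^\star$, I would apply the second part of \Cref{thm:posleqstruc} with $\cS=\{\{u,v\}\}$: there exists a most-expressive node embedding $\mZ^\star|\tA,\mX$ and a function $f^{(2)}$ such that $\JGammastar(\vec{\{u,v\}},\tA,\mX)=\E_{\mZ^\star}[f^{(2)}(\mZ^\star_u,\mZ^\star_v)\mid\tA,\mX]$. Setting $f:=f^{(2)}$ gives the stated formula, and since this holds for every pair $(u,v)$ we get the $\forall u,v$ conclusion. The only subtlety to address is symmetry in the arguments of $f$: because $\vec{S}$ is defined as the ascending-order tuple of $S$, $f^{(2)}$ inherits the appropriate symmetry automatically, so no extra symmetrization step is needed.

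The main obstacle, and really the only non-bookkeeping step, is verifying that the existence construction in the second part is measurable and $\mathcal{G}$-invariant; once that is in hand the corollary is essentially a rephrasing of \Cref{thm:jointpred} and \Cref{thm:posleqstruc} specialized to $|S|=2$.
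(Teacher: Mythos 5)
Your proposal is correct and follows essentially the same route as the paper: the paper proves this corollary in one line as the $|S|=2$ specialization of Corollary~\ref{cor:higherorderpred}, which itself is a direct application of Theorems~\ref{thm:jointpred} and~\ref{thm:posleqstruc}, i.e.\ exactly the two results you invoke. Your only addition is an explicit orbit-encoding argument for the existence of $\JGammastar((u,v),\tA,\mX)$, which the paper leaves implicit; that argument is sound and harmless.
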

A generalization of \Cref{cor:linkpred} is also possible, where \Cref{thm:posleqstruc} is used to allow us to create joint representations from simpler node embedding sampling methods.
\begin{corollary}\label{cor:higherorderpred}
Sample $\mZ$ according to \Cref{def:pos}.
Then, we can learn a $k$-node structural representation of a subset of $k$ nodes $S \in \cP^\star(V)$, $|S|=k$, simply by learning a function $f^{(k)}$ whose average $\Gamma(\vec{S},\tA,\mX) = \E[f^{(k)}((Z_v)_{v \in S})]$ can be used to predict $Y(\vec{S},\tA,\mX)$.
\end{corollary}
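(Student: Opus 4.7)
The plan is to establish two claims separately: first, that any aggregation of the form $\Gamma(\vec{S},\tA,\mX) := \E[f^{(k)}((\mZ_v)_{v \in S})\,|\,\tA,\mX]$ under a $\mathcal{G}$-equivariant embedding distribution is a valid joint structural representation in the sense of \Cref{def:jointGamma}; and second, that with an appropriate choice of embedding $\mZ$ and function $f^{(k)}$, this representation carries enough information to predict $Y(\vec{S},\tA,\mX)$.

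For the first claim, I would apply an arbitrary $\pi \in \Pi_n$ to both $\vec{S}$ and the graph and use the $\mathcal{G}$-equivariance property from \Cref{def:pos}: since $p(\cdot\,|\,\pi(\tA),\pi(\mX)) = \pi(p(\cdot\,|\,\tA,\mX))$, sampling $\mZ$ from the permuted graph is equivalent in distribution to sampling from the original graph and then applying $\pi$. A change of variables inside the expectation, combined with the observation that selecting indices in $\pi(S)$ after permutation is the same as selecting indices in $S$ before permutation, then yields $\Gamma(\pi(\vec{S}),\pi(\tA),\pi(\mX)) = \Gamma(\vec{S},\tA,\mX)$. The stronger requirement that jointly isomorphic subsets $S_1, S_2$ receive equal representations follows from \Cref{lem:isoZ}, which already guarantees equality of the corresponding marginals $p((\mZ_v)_{v \in S_1}|\tA,\mX)$ and $\pi(p((\mZ_v)_{v \in S_2}|\tA,\mX))$, so the averaged quantity is necessarily the same.

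For the second claim, I would directly invoke the second half of \Cref{thm:posleqstruc}, which already exhibits a most-expressive embedding $\mZ^\star$ and a family $\{f^{(k)}\}_{k=1}^n$ satisfying $\JGammastar(\vec{S},\tA,\mX) = \E_{\mZ^\star}[f^{(|S|)}((\mZ^\star_v)_{v \in S})\,|\,\tA,\mX]$. Plugging this into \Cref{thm:jointpred}, which represents $Y(\vec{S},\tA,\mX)$ as a measurable function of $\JGammastar(\vec{S},\tA,\mX)$ and exogenous noise independent of $(\tA,\mX)$, immediately shows that $\Gamma$ suffices for predicting $Y$.

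The main delicate step is the bookkeeping in the invariance argument: tracking how a single permutation $\pi$ acts simultaneously on the indexing set $S$, on the sampling distribution $p(\cdot\,|\,\tA,\mX)$, and on the tuple $(\mZ_v)_{v \in S}$ fed into $f^{(k)}$, so that the three actions cancel inside the expectation. Once this cancellation is pinned down, the rest of the corollary is essentially a repackaging of results already proved, namely \Cref{thm:posleqstruc} for existence and \Cref{thm:jointpred} for sufficiency.
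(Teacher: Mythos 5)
Your argument is correct, but it is organized differently from the paper's own proof. The paper treats the corollary as a forward application of \Cref{thm:posleqstruc}: it writes $f^{(k)}((\mZ_v)_{v\in S})$ via noise outsourcing as a function of the node-level structural representations plus exogenous noise $\epsilon_S$, appeals to a ``most powerful'' permutation-invariant $f'^{(k)}$ (in the spirit of Janossy pooling) to absorb the dependencies within the node set into the joint representation $\JGamma(\vec{S},\tA,\mX)$, and then eliminates the noise by taking the expectation. You instead split the claim into (i) a direct equivariance check that, for \emph{any} embedding satisfying \Cref{def:pos}, the average $\E[f^{(k)}((\mZ_v)_{v\in S})\mid\tA,\mX]$ is a valid joint structural representation in the sense of \Cref{def:jointGamma}, using the change of variables under $\pi$ and \Cref{lem:isoZ}, and (ii) an appeal to the constructive second half of \Cref{thm:posleqstruc} (existence of $\mZ^\star$ and $\{f^{(k)}\}$ realizing $\JGammastar$) combined with \Cref{thm:jointpred} for sufficiency to predict $Y$. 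Your decomposition is cleaner and makes explicit something the paper leaves implicit, namely that validity (invariance) holds for arbitrary equivariant $\mZ$ while full predictive sufficiency is only guaranteed for the most-expressive construction $\mZ^\star$; the paper's route, in exchange, stays closer to the ``learn $f^{(k)}$ on sampled embeddings'' phrasing of the statement. One point you should state explicitly in step (i): \Cref{lem:isoZ} only gives equality of the marginals of $(\mZ_v)_{v\in S_1}$ and $(\mZ_v)_{v\in S_2}$ up to an internal permutation of the tuple, so concluding that the two averages coincide requires $f^{(k)}$ to be permutation-invariant in its $k$ arguments (or the tuples to be ordered consistently with the isomorphism); this is consistent with the multiset construction in the proof of \Cref{thm:posleqstruc} and with the sum-pooling used in the experiments, but it is not automatic for an arbitrary $f^{(k)}$.
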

The proof of  \Cref{cor:higherorderpred} is in the \Appendix.
Finally, we show that the concepts of transductive and inductive learning are unrelated to the notions of node embeddings and structural representations.
%

\begin{corollary}\label{cor:tansdinduc}
Transductive and inductive learning are unrelated to the concepts of node embeddings and structural representations.
\end{corollary}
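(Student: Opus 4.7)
The plan is to unfold \emph{unrelated} into the concrete claim that all four pairings (structural/transductive, structural/inductive, embedding/transductive, embedding/inductive) are realizable, so that neither choice of representation forces a choice of learning regime. First I would pin down working definitions: a \emph{transductive} procedure operates on a single fixed graph $G=(\tA,\mX)$ observed at training time and produces predictions only on vertex subsets of that same $G$; an \emph{inductive} procedure produces a predictor that can be evaluated on graphs $G' \in \bigcup_{n\geq 1}\Sigma_n$ not seen during training, including for node subsets whose members were never individually seen.

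Next I would handle structural representations. By \Cref{def:struc,def:jointGamma} the map $\JGamma$ is a $\mathcal{G}$-invariant function on $\cP^\star(V)\times \Sigma_n$ whose defining property refers only to the graph it is evaluated on; nothing in the definition ties it to a particular training graph. Hence one can (i) restrict attention to a single graph and learn a task-specific head on top of $\JGamma$ (transductive), or (ii) evaluate $\JGamma$ on any new graph $G'$ and feed the result into a task head learned on other graphs (inductive). Examples of both exist in the GNN literature and can be cited as witnesses.

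Then I would handle node embeddings. By \Cref{def:pos} the distribution $p(\cdot\mid \tA,\mX)$ is specified for every $(\tA,\mX)\in \Sigma_n$ and is $\mathcal{G}$-equivariant; sampling it on a fresh graph therefore yields embeddings without retraining, giving an inductive use. The transductive use is witnessed by the classical matrix-factorization embeddings in \Cref{cor:factor}, which produce coordinates only for the nodes of the observed graph. Moreover, the constructive second half of \Cref{thm:posleqstruc} furnishes a most-expressive $\mZ^\star$ whose expectation recovers $\JGammastar$; so any inductive prediction made via structural representations can equivalently be implemented inductively through node embeddings, and conversely any transductive prediction made via embeddings has a structural-representation counterpart on the same graph.

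Having exhibited all four pairings as non-empty, I would conclude by noting that \emph{transductive vs.\ inductive} is a property of how a learning procedure is deployed (what graphs it is asked to predict on), whereas \emph{node embedding vs.\ structural representation} is a property of the $\mathcal{G}$-symmetry of the representation itself (equivariant distribution vs.\ invariant function); the two axes are therefore independent. The main obstacle is purely definitional: the word \emph{unrelated} admits several readings, and the proof has to commit to the orthogonality reading above and then simply appeal to the already-proven structural/embedding correspondence (\Cref{thm:posleqstruc}) to rule out any hidden implication between the two axes.
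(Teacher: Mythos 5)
Your proposal is correct, and it hinges on the same key ingredient as the paper's own argument: the statistical equivalence of \Cref{thm:posleqstruc}, which lets any prediction made through one kind of representation be reproduced through the other. Where you differ is in the packaging. The paper's proof is two sentences: by \Cref{thm:posleqstruc} one can build most-expressive joint representations from node embeddings and recover node embeddings from most-expressive representations, hence (given enough computation) the two have the same generalization performance on any task, and since the transductive/inductive distinction is precisely about generalization to unseen instances, it cannot attach to either representation type. You instead commit to an explicit reading of ``unrelated'' as orthogonality of two axes, pin down working definitions of transductive and inductive deployment, and exhibit all four pairings as realizable, using \Cref{def:struc}, \Cref{def:jointGamma}, \Cref{def:pos} and \Cref{cor:factor} as witnesses before invoking \Cref{thm:posleqstruc} to rule out a hidden implication. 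Your version buys precision --- it makes explicit what claim is actually being proven for an informally worded corollary, and it does not need the ``same generalization performance'' step, only non-emptiness of each quadrant --- while the paper's version buys brevity and directly ties the conclusion to the generalization-based definition of the transductive/inductive dichotomy it discusses in \Cref{sec:related}. One caveat: your inductive-use argument for node embeddings (sampling $p(\cdot\mid\tA',\mX')$ on a fresh graph ``without retraining'') implicitly assumes the embedding distribution is specified by a procedure that transfers across graphs, which is exactly the point the paper makes operationally with MC-SVD; it is worth stating that assumption rather than leaving it inside \Cref{def:pos}, since the definition alone fixes a distribution per graph but not a learned, transferable sampler.
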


\Cref{cor:tansdinduc} clears a confusion that, we believe, arises because traditional applications of node embeddings use a single Monte Carlo sample of $\mZ|\tA,\mX$ to produce a structural representation (e.g., \citep{mikolov2013distributed}). Inherently, a classifier learned with such a poor structural representation may fail to generalize over the test data, and will be deemed {\em transductive}.

\begin{corollary} \label{cor:RPGNN}
Merging a node embeddings sampling scheme with GNNs can increase the structural representation power of GNNs.
\end{corollary}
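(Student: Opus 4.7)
The plan is to use Theorem \ref{thm:posleqstruc} constructively. First, I would fix any GNN architecture whose induced structural node (or tuple) representation, call it $\Gamma_{\text{GNN}}$, is known to be strictly less expressive than $\JGammastar$ --- for instance, any $1$-WL-bounded message-passing GNN, for which there exist non-isomorphic graphs (and non-isomorphic node tuples within a graph) that collapse to the same representation. I would then define an augmented scheme: sample a node-embedding matrix $\mZ \sim p(\cdot|\tA,\mX)$ from a $\mathcal{G}$-equivariant distribution satisfying Definition \ref{def:pos}, concatenate $\mZ$ with $\mX$ to form a richer node-attribute matrix $\mX' = [\mX \,\|\, \mZ]$, and feed $(\tA,\mX')$ into the same GNN. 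Finally, for a subset $S \in \cP^\star(V)$, define the proposed representation $\widetilde{\Gamma}(\vec{S},\tA,\mX) \coloneqq \E_{\mZ}\!\left[\,f\big((\Gamma_{\text{GNN}}(v,\tA,\mX'))_{v\in S}\big)\,\big|\,\tA,\mX\right]$ for an appropriately chosen symmetric function $f$.

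The next step is to check that $\widetilde{\Gamma}$ is a valid structural representation in the sense of Definition \ref{def:jointGamma}. Because $p(\cdot|\tA,\mX)$ is $\mathcal{G}$-equivariant (Definition \ref{def:pos}) and the base GNN is permutation equivariant in its inputs, Lemma \ref{lem:isoZ} gives that isomorphic subsets receive the same marginal distribution over the inner quantity, so taking the expectation restores $\mathcal{G}$-invariance. Hence $\widetilde{\Gamma}$ satisfies Definition \ref{def:jointGamma}, and an analogous argument applies to single nodes via Definition \ref{def:struc}.

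For the strict expressivity gain, I would invoke the second half of Theorem \ref{thm:posleqstruc}: there exists a most-expressive embedding $\mZ^\star|\tA,\mX$ and a family $\{f^{(k)}\}$ such that $\JGammastar(\vec{S},\tA,\mX) = \E_{\mZ^\star}[f^{(|S|)}((\mZ^\star_v)_{v\in S})|\tA,\mX]$. Since once $\mZ^\star$ is appended to $\mX$ the nodes effectively carry unique identifiers (almost surely), even a weak GNN composed with a sufficiently expressive pooling $f$ can realize $f^{(|S|)}$, so $\widetilde{\Gamma} = \JGammastar$ is attainable, while $\Gamma_{\text{GNN}}$ alone cannot reach $\JGammastar$ on the known hard instances. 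This strictly dominates the unaugmented GNN and yields the corollary.

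The main obstacle I anticipate is the last step: one must argue that the augmented GNN class, composed with the outer average, is rich enough to realize some $f^{(|S|)}$ that separates structural classes the base GNN conflates, without accidentally breaking equivariance on the sampling side. Concretely, the delicate point is choosing $\mZ$ expressive enough (e.g., the most-expressive sampler guaranteed by Theorem \ref{thm:posleqstruc}) while ensuring that its distribution remains $\mathcal{G}$-equivariant, so that the averaging in the definition of $\widetilde{\Gamma}$ does not wash out the additional resolving power the embeddings provide.
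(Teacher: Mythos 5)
Your proposal is correct and follows essentially the same route as the paper: both obtain the gain by sampling a $\mathcal{G}$-equivariant node embedding (in the paper's case, randomly appended unique node ids à la RP-GNN of \cite{murphy2019relational}), feeding it to the GNN as extra node features, and averaging over samples so that, by \Cref{thm:posleqstruc}, the resulting invariant representation can exceed the 1-WL-bounded power of the bare GNN. Your extra care in verifying $\mathcal{G}$-invariance of the averaged representation and in flagging that the GNN-plus-pooling class must actually realize $f^{(|S|)}$ is exactly the point the paper delegates to the RP-GNN construction.
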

\Cref{cor:RPGNN} is a direct consequence of  \Cref{thm:posleqstruc}, with \cite{murphy2019relational} showing RP-GNN as a concrete method to do so.

%
\vspace{-10pt}
\section{Results}
\vspace{-10pt}
This section focuses on applying the lessons learned in \Cref{sec:theory} in four tasks, divided into two common goals.
The goal of the first three tasks is to show that, as described in \Cref{thm:posleqstruc}, node embeddings can be used to create expressive structural embeddings of nodes, tuples, and triads.
These representations are then subsequently used to make predictions on downstream tasks with varied node set sizes.
The tasks also showcase the added value of using multiple node embeddings (Monte Carlo) samples to estimate structural representations, both during training and testing.
Moreover, showcasing \Cref{thm:jointpred} and the inability of node representations to capture joint structural representations, these tasks show that structural node representations are useless in prediction tasks over more than one node, such as links and triads.
The goal of fourth task is to showcase how multiple Monte Carlo samples of node embeddings are required to observe the fundamental relationship between structural representations and node embeddings predicted by \Cref{thm:posleqstruc}.

{\em An important note:} Our proposed theoretical framework is not limited to the way we generate node embeddings. 
For example, our theoretical framework can use SVD in an inductive setting, where we train a classifier in one graph and test in a different graph, which was thought not possible previously with SVD.
SVD with our theoretical framework is denoted MC-SVD, to emphasize the importance of Monte Carlo sampling in building better structural representations.
Alternatively, more expressive node embeddings can be obtained using Colliding Graph Neural Networks (CGNN), as we show in the \hyperref[sec:cgnn]{Appendix (\Cref{sec:cgnn} and \Cref{sec:cgnnalgo})}





\vspace{-10pt}
\subsection{Quantitative Results}
\vspace{-10pt}
In what follows, we evaluate structural representations estimated from four node embedding techniques, namely GIN \citep{xu2018powerful}, RP-GIN \citep{murphy2019relational}, 1-2-3 GNN \citep{morris2019weisfeiler}, MC-SVD and CGNN. 
We classify GIN, RP-GIN and 1-2-3 GNN as node embedding techniques, as they employ the unsupervised learning procedure of \cite{hamilton2017inductive}.
These were chosen because of their potential extra link and triad representation power over traditional structural representation GNNs.
All node embedding methods are evaluated by their effect in estimating good structural representation for downstream task accuracy.
We partition $G=(\tA,\mX)$ into three non-overlapping induced subgraphs, namely $G_\text{train}=(\tA_\text{train},\mX_\text{train})$, $G_\text{val}=(\tA_\text{val},\mX_\text{val})$ and $G_\text{test}=(\tA_\text{test},\mX_\text{test})$, which we use for training, validation and testing, respectively.
In learning all four node embedding techniques, we only make use of the graphs $G_\text{train}$ and $G_\text{val}$.
All the four models used here have never seen the test graph $G_\text{test}$ before test time ---i.e., all our node embedding methods, used in the framework of \Cref{thm:posleqstruc}, behave like inductive methods.

{\em Monte Carlo joint representations during an {\bf unsupervised} learning phase:} 
A key component of our optimization is learning joint representations from node embeddings ---as per \Cref{thm:posleqstruc}.
For this, at each gradient step (in practice, we do at each epoch), we perform a Monte Carlo sample of the node embeddings $\mZ | \tA,\mX$.
This, procedure optimizes a proper upper bound on the empirical loss, if the loss is the negative log-likelihood, cross-entropy, or a square loss.
The proof is trivial by Jensen's inequality.
For GIN, RP-GIN and 1-2-3 GNN, we add random edges to the graph following a random walk at each epoch~\citep{hamilton2017inductive}. 
For the MC-SVD procedure,  we use the left eigenvector matrix obtained by: (1) a random seed, (2) a random input permutation of the adjacency matrix, and (3) a single optimization step, rather than running SVD until it converges.
We also have results with MC-SVD$^\dagger$, which is the same procedure as before, but runs SVD until convergence ---noting that the latter is likely to give deterministic results in large real-world graphs.

{\em Monte Carlo joint representations during a {\bf supervised} learning phase:} 
During the supervised phase, we first estimate a structural joint representation $\hat{\JGamma}(\vec{S},\tA,\mX)$ as the average of $m \in \{1,5,20\}$ Monte Carlo samples of a permutation-invariant function~\citep{murphy2018janossy,zaheer2017deep} (sum-pooling followed by an MLP) applied to a sampled node embedding $(\mZ_v)_{v \in S} | \tA,\mX$.
Then, using $\hat{\Gamma}(\vec{S},\tA,\mX)$, we predict the corresponding target variable $Y(S,\tA,\mX)$ of each task using an MLP.
The node sets of our tasks $S \subseteq V$, have sizes $|S| \in \{1,2,3\}$, corresponding to node classification, link prediction, and triad prediction tasks, respectively.
%


\begin{table}[t!!!]
\vspace{-25pt}
\centering
\caption{\small Micro F1 score on three distinct tasks averaged over 12 runs with standard deviation in parenthesis. The number within the parenthesis beside the model name indicates the number of Monte Carlo samples used in the estimation of the structural representation. MC-SVD$^\dagger(1)$ denotes the SVD procedure run until convergence with one Monte Carlo sample for the representation. Bold values show maximum empirical average, and multiple bolds happen when its standard deviation overlaps with another average. Results for Citeseer are provided in the \Appendix in \Cref{tab:citeseer}.}
\vspace{-5pt}
\label{tab:results}
\resizebox{1.\textwidth}{!}{
\begin{tabular}{rlllllllll}
& \multicolumn{3}{c}{Node Classification}
& \multicolumn{3}{c}{Link Prediction}
& \multicolumn{3}{c}{Triad Prediction}  \\
\cmidrule(l){2-4}
\cmidrule(l){5-7}
\cmidrule(l){8-10}
             & \multicolumn{1}{c}{Cora}                 
             & \multicolumn{1}{c}{Pubmed}               
             & \multicolumn{1}{c}{PPI}    
             & \multicolumn{1}{c}{Cora}                 
             & \multicolumn{1}{c}{Pubmed}               
             & \multicolumn{1}{c}{PPI}                  
             & \multicolumn{1}{c}{Cora}                 
             & \multicolumn{1}{c}{Pubmed}               
             & \multicolumn{1}{c}{PPI}                  \\
\toprule
{\em Random} &      0.143                &  0.333                   &               0.5$^{121}$  &   0.500                  &     0.500          &              0.500        &            0.250                   &     0.250                 &               0.250                     \\
\hline
GIN(1)         &  0.646(0.021)   & {\bf 0.878(0.006) } &  0.533(0.003)    &  0.526(0.029)    &   0.513(0.048)    &  0.604(0.018)  &     0.280(0.010)   &      0.430(0.019)               &  0.400(0.006)         \\
GIN(5)          &  0.676(0.031)    & {\bf 0.880(0.003)}     &  0.535(0.004)  &   0.491(0.019)    &    0.517(0.028)   & 0.609(0.012)   &   0.284(0.017)    &    0.422(0.024)    &   0.397(0.004)                   \\
GIN(20)          & 0.678(0.024)  &  {\bf 0.880(0.002) }   & 0.536(0.003)  &    0.514(0.026)    &     0.512(0.042)    &  0.603(0.010)      &   0.281(0.010)       &    0.422(0.028)   &  0.399(0.004)                      \\
\hline
RP-GIN(1)         &  0.655(0.023)   &  {\bf 0.879(0.002)}     & 0.534(0.005)   &     0.506(0.016)    &    0.616(0.048)  &  0.605(0.011)     & 0.283(0.013)    &  0.423(0.024)    &  0.400(0.005)         \\
RP-GIN(5)         & 0.681(0.022)   & {\bf 0.881(0.004)}    &  0.534(0.004)   & 0.498(0.016)  &   0.637(0.038)   & 0.612(0.006)       & 0.285(0.025)   &  0.429(0.024)     &  0.399(0.009)        \\
RP-GIN(20)         &  0.675(0.032)     & {\bf 0.879(0.005) }   &  0.533(0.003)   &     0.518(0.017)  &      0.619(0.032)    & 0.603(0.007)       & 0.279(0.011)  & 0.418(0.011)  & 0.393(0.003)     \\
\hline
1-2-3 GNN(1)         &  0.319(0.017)     & 0.412(0.005)   &  0.403(0.003)   &     0.501(0.007)  &      0.495(0.018)    & 0.502(0.005)       & 0.280(0.010)  & 0.416(0.020)  & 0.250(0.003)     \\
1-2-3 GNN(5)         &  0.321(0.008)     & 0.395(0.065)   &  0.405(0.001)   &     0.501(0.018)  &      0.500(0.002)    & 0.501(0.003)       & 0.285(0.015)  & 0.418(0.029)  & 0.251(0.005)     \\
1-2-3 GNN(20)         &  0.324(0.010)     & 0.462(0.113)   &  0.401(0.007)   &     0.501(0.007)  &      0.499(0.002)    & 0.501(0.008)       & 0.285(0.014)  & 0.419(0.026)  & 0.254(0.008)     \\
\hline
MC-SVD$^\dagger$(1)        &     0.665(0.014)                &       0.810(0.009)               &      0.523(0.005)                 &  0.588(0.029)             &    0.807(0.024)            &     {\bf 0.755(0.010)}    &     0.336(0.038)                &   0.515(0.077)             &   0.532(0.010)                              \\
MC-SVD(1)        &     0.667(0.017)                 &       0.825(0.007)               &      0.521(0.006)                    & 0.583(0.020)             &    0.818(0.032)            &     {\bf 0.755(0.008)}                                    &     0.304(0.034)                &   0.518(0.065)             &   0.529(0.006)                        \\
MC-SVD(5)        &      0.669(0.013)                &       0.842(0.015)               &       0.556(0.009)                 & 0.572(0.019)              &   {\bf 0.848(0.038)}                &    {\bf 0.754(0.006) }                &         0.306(0.037)               &       {\bf 0.567(0.061)}             &   {\bf 0.544(0.008) }                                    \\
MC-SVD(20)       &      0.672(0.013)                &       0.855(0.010)               &       0.591(0.009)                & 0.580(0.021)               &    {\bf 0.868(0.029)}          &       {\bf 0.762(0.010)}                                &  0.300(0.033)          &    {\bf 0.546(0.029)}             &     {\bf 0.550(0.007)}                       \\
\hline
CGNN(1)      &    0.468(0.026)                  &       0.686(0.020)               &        0.545(0.010)                 & 0.682(0.026)             &     0.587(0.027)             &     0.661(0.015)                                   &    0.352(0.028)               &      0.404(0.014)           &   0.414(0.009)                       \\
CGNN(5)      &      0.641(0.022)                &       0.808(0.008)               &       0.637(0.014)                 &  {\bf 0.707(0.027)}             &    0.585(0.037)            &        0.704(0.012)                               &  {\bf 0.414(0.045)}             &    0.417(0.018)           &      0.463(0.026)                                \\
CGNN(20)     &      {\bf 0.726(0.024)}                &       0.831(0.010)               &     {\bf 0.707(0.015)}                   & {\bf 0.712(0.041)}              &    0.581(0.039)           &         0.738(0.011)                              &   {\bf 0.405(0.034)}               &  0.419(0.017)            &    0.498(0.021)                      \\
\bottomrule
\end{tabular}%
}
\vspace{-20pt}
\end{table}


{\em Datasets:} 
We consider four graph datasets used by \cite{hamilton2017inductive}, namely Cora, Citeseer, Pubmed \citep{namata2012query, sen2008collective}  and PPI \citep{zitnik2017predicting}. 
Cora, Citeseer and Pubmed are citation networks, where vertices represent papers, edges represent citations, and vertex features are bag-of-words representation of the document text.
The PPI (protein-protein interaction) dataset is a collection of multiple graphs representing the human tissue, where vertices represent proteins, edges represent interactions across them, and node features include genetic and immunological information.
Train, validation and test splits are used as proposed by  \cite{yang2016revisiting} (see \Cref{tab:datasets} in the \Appendix).
Further dataset details  can be found in the \Appendix. 

{\em Node classification task: }
This task predicts node classes for each of the four datasets.
In this task, structural node representations are enough.
The structural node representation is used to classify nodes  into different classes using an MLP, whose weights are trained in a supervised manner using the same splits as described above.
In Cora, Citeseer and Pubmed, each vertex belongs only to a single class, whereas in the PPI graph dataset, nodes could belong to multiple classes.    

{\em Link prediction task: }
Here, we predict a small fraction of edges and non-edges in the test graph, as well as identify all false edges and non-edges (which were introduced as a corruption of the original graph) between different pairs of nodes in the graph.
Specifically, we use joint tuple representations $\Gamma((u,v),\tA,\mX)$, for $u,v \in V$, as prescribed by \Cref{thm:posleqstruc}.
Since, datasets are sparse in nature, and a trivial `non-edges' predictor would result in a very high accuracy, we balance the train and validation and test splits to contain an equal number of edges and non-edges.

{\em Triad prediction task:}
This task involves the prediction of triadic interaction as well as identification of possible fake interactions in the data between the three nodes under consideration.
In this case, we use joint triadic representations $\Gamma((u,v,h),\tA,\mX)$, for $u,v,h \in V$, as prescribed by \Cref{thm:posleqstruc}.
Here, we ensure that edge corruptions are dependent.
We treat the graphs as being undirected in accordance with previous literature, and predict the number of true (uncorrupted) edges between the three nodes.  
Again, to handle the sparse nature of the graphs, we use a balanced dataset for train, validation, and test.

In \Cref{tab:results} we present Micro-F1 scores for all four models over the three tasks.
First, we note how more Monte Carlo samples at test time tend to increase test accuracy.
In {\em node classification tasks}, we note that structural node representations from CGNN node embeddings significantly outperform other methods in two of the three datasets (the harder tasks).
In {\em link prediction tasks}, the low accuracy of GNN-based methods (close to random) showcases the little extra-power of GIN and RP-GIN sampling schemes have over the the inability of structural node representations to predict links. 
Surprisingly, in {\em triads predictions}, the accuracy of GNN-based methods is much above random in some datasets, but still far from other node embedding methods.
In {\em link and triad prediction tasks}, MC-SVD and CGNN share the lead with MC-SVD winning on Pubmed and PPI, and CGNN being significantly more accurate on Cora.
Although, the 1-2-3 GNN is based on the two-Weisfeiler-Lehman (2-WL) algorithm (pairwise Weisfeiler-Lehman algorithm~\citep{furer2017combinatorial}), which provides tuple representations that can be exploited towards link prediction, it is however an approximation primarily designed for graph classification tasks. Unfortunately, the 1-2-3 GNN performs quite poorly on all our tasks (node classification, link and triad prediction), indicating the need for a task-specific approximation of 2-WL GNN's.
Results for Citeseer, in the \Appendix, show similar results.
\vspace{-25pt}
\subsection{Qualitative Results}
\vspace{-10pt}
We now investigate the transformation of node embedding into node and link structural representations.
\Cref{thm:posleqstruc} shows that the average of a function over node embedding Monte Carlo samples gives node and link embedding.
In this experiment, we empirically test \Cref{thm:posleqstruc}, by creating structural representations from the node embedding random matrix $\mZ$, defined as the left eigenvector matrix obtained through SVD (ran until convergence), with the sources of randomness being  due to a random permutation of the adjacency matrix given as input to the SVD method and the random seed it uses.
Consider $m$ such embedding matrices Monte Carlo samples, $\cZ^{(m)} = \{\mZ^{(i)}\}_{i=1}^m$.

{\em Structural node representations from node embeddings:}
According to \Cref{thm:posleqstruc}, the average $\E[\mZ_{v,\cdot} | \tA]$ is a valid structural representation of node $v \in V$ in the adjacency matrix $\tA$ of \Cref{fig:example}.
To test this empirically, we consider the unbiased estimator $\hat{\mu}(v,\cZ^{(m)}) = {\frac{1}{m}} \sum_{i=1}^m \mZ_{v,\cdot}^{(i)}$, $v \in V$, where $\lim_{m\to\infty} \hat{\mu}(v,\cZ^{(m)}) \stackrel{a.s.}{=} \E[\mZ_{v,\cdot} | \tA]$.
Figure \ref{fig:sub1} shows the Euclidean distance between the empirical structural representations $\hat{\mu}(\text{orca},\cZ^{(m)})$ and $\hat{\mu}(\text{lynx},\cZ^{(m)})$ as a function of $m \in [1,200]$.
As expected, because these two nodes are isomorphic, $\Vert \hat{\mu}(\text{orca},\cZ^{(m)}) - \hat{\mu}(\text{lynx},\cZ^{(m)}) \Vert \to 0$ as $m$ grows, with $m=100$ giving reasonably accurate results.


{\em Structural link representations from node embeddings: } 
According to \Cref{thm:posleqstruc}, the average $\E[f^{(2)}(\mZ_{u,\cdot}, \mZ_{v,\cdot}) | \tA]$ of a function $f^{(2)}$ is a valid structural representation of a link with nodes $u,v \in V$ in the adjacency matrix $\tA$ of \Cref{fig:example}. 
As an example, we use $f^{(2)}(a,b)=\Vert a -b \Vert$, and define the unbiased estimator $\hat{\mu}(u, v, \cZ^{(m)})$ = ${\frac{1}{m}}$ $\sum_{i=1}^m \Vert Z_u^{(i)} - Z_v^{(i)} \Vert, \forall u,v \in V$,  where $\lim_{m\to\infty} \hat{\mu}(u, v, \cZ^{(m)}) \stackrel{a.s.}{=} \E[\Vert \mZ_{u,\cdot} - \mZ_{v,\cdot} \Vert | \tA]$.
%
\Cref{fig:sub2} shows the impact of increasing the number of Monte Carlo samples $m$ over the empirical structural representation of links.
We observe that although the empirical node representations of the orca and the lynx seem to converge to the same value, $\lim_{m\to\infty} \hat{\mu}(\text{orca},\cZ^{(m)}) = \lim_{m\to\infty} \hat{\mu}(\text{lynx},\cZ^{(m)})$,  their empirical joint representations with the coyote converge to different values, $\lim_{m\to\infty} \hat{\mu}(\text{lynx, coyote}, \cZ^{(m)})  \neq \lim_{m\to\infty} \hat{\mu}(\text{orca, coyote}, \cZ^{(m)})$, as predicted by \Cref{thm:posleqstruc}. Also note a similar (but weaker) trend for $\lim_{m \to \infty} \hat{\mu}(\text{orca, lynx}, \cZ^{(m)})  \neq \lim_{m\to\infty} \hat{\mu}(\text{orca, coyote}, \cZ^{(m)})$, showing these three tuples to be structurally different.  



\begin{figure}[t!!!]
    \vspace{-25pt}
	\begin{subfigure}{.5\textwidth}
	    \includegraphics[width=1\linewidth,height=1.1in]{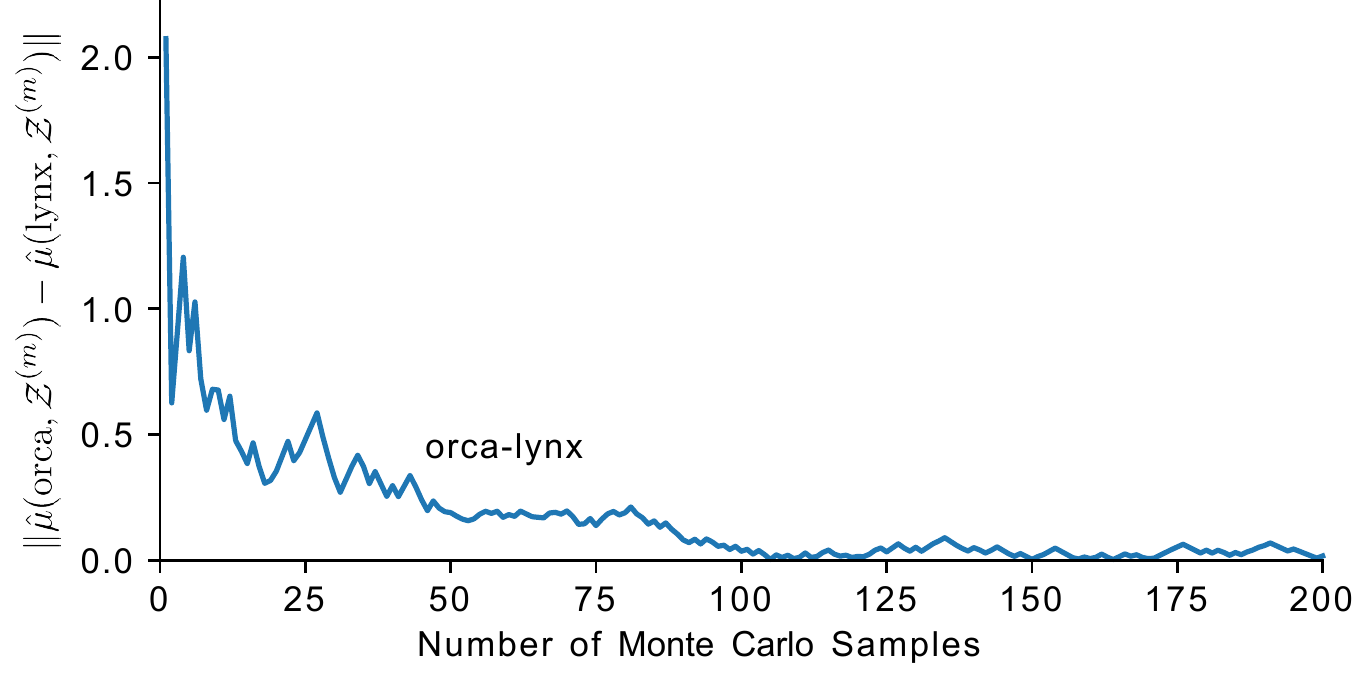}
	    \caption{\small Difference in avg.\ structural node representations.}
	    \label{fig:sub1}
	\end{subfigure}%
	\begin{subfigure}{.5\textwidth}
	    \includegraphics[width=1\linewidth,height=1.1in]{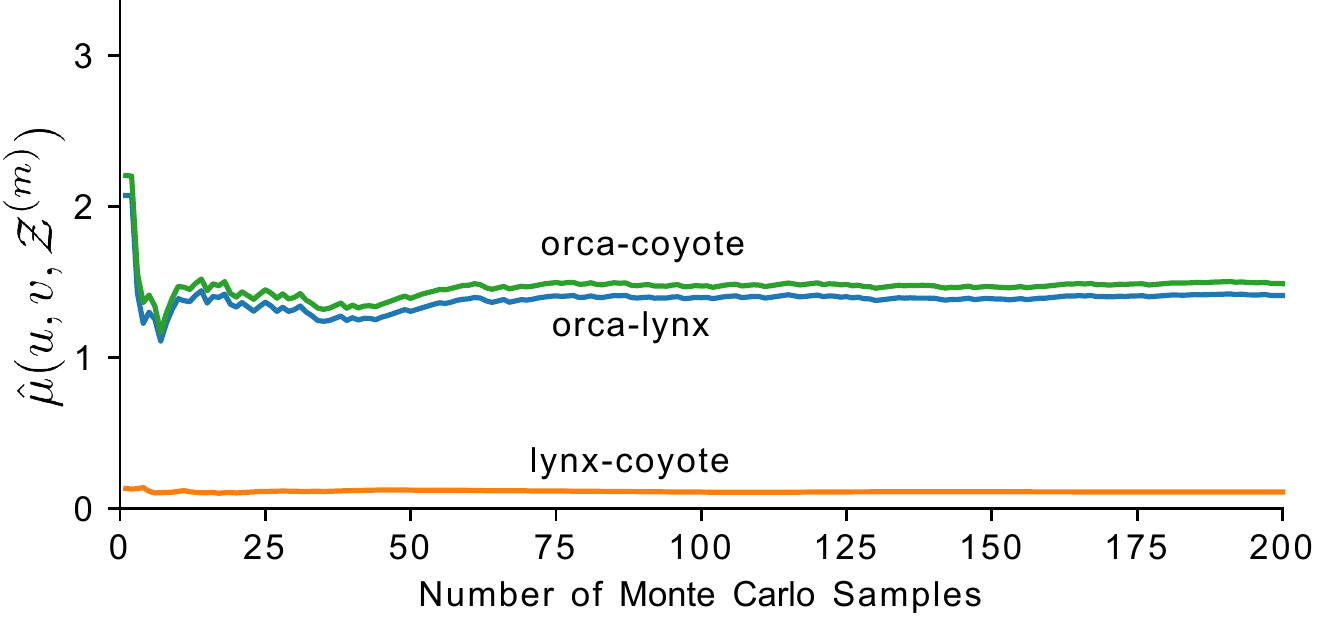}
	    \caption{\small Average structural link representations}
	    \label{fig:sub2}
	\end{subfigure}
	\caption{\small Structural Representations for nodes and links using multiple samples obtained using MC-SVD on the disconnected food web graph shown in \Cref{fig:example}.}
	\vspace{-10pt}
\end{figure}

\vspace{-10pt}
\section{Related Work}\label{sec:related}
\vspace{-10pt}
{\em Node Embeddings vs Structural Representations:}
Prior works have categorized themselves as one of either node embedding methods or methods which learn structural representations.
This artificial separation, consequently led to little contemplation of the relation between the two, restricting each of these approaches to a certain subsets of downstream tasks on graphs.
Node embeddings were arguably first defined in \citeyear{spearman1904general}, through \citeauthor{spearman1904general}'s common factors.
Ever since, there has never been a universal definition of node embedding: node embeddings were simply the product of a particular method.
This literature features a myriad of methods, e.g., matrix factorization \citep{belkin2002laplacian,cao2015grarep,ahmed2013distributed,ou2016asymmetric}, implicit matrix factorization  \citep{mikolov2013distributed,arora2009expander, chen2012playlist, perozzi2014deepwalk,grover2016node2vec}, Bayesian factor models \citep{mnih2008probabilistic,gopalan2014bayesian}, and some types of neural networks~\citep{you2019position,liang2018variational,tang2019correlated,grover2018graphite}.


Arguably, the most common interpretation of node embeddings borrows from definitions of graph (node) embeddings in metric spaces: a measure of relative node {\em closeness}~\citep{abraham2006advances,bourgain1985lipschitz,candes2009exact,graham1985isometric,kleinberg2007geographic,linial1995geometry,rabinovich1998lower,recht2010guaranteed,shaw2011learning}. 
Even in non-metric methods, such as word2vec~\citep{mikolov2013distributed} and Glove~\citep{pennington2014glove}, the embeddings have properties similar to those of metric spaces~\citep{nematzadeh2017evaluating}.
Note that the definition of {\em close} varies from method to method, i.e., it is {\em model-dependent}.
Still, this interpretation of {\em closeness} is the reason why their downstream tasks are often link prediction and clustering.
However, once the literature started defining relative node closeness with respect to structural neighborhood similarities (e.g.,~\citet{henderson2012rolx,ribeiro2017struc2vec, donnat2018learning}), node embeddings and structural representations became more strangely entangled.

Structural representations have an increasing body of literature focused on node and whole-graph classification tasks.
Theoretically, these works abandon metric spaces in favor of a group-theoretic description of graphs~\citep{bloem2019probabilistic,chen2019equivalence,kondor2018generalization,maron2019invariant,murphy2019relational}, with connections to finite exchangeability and prior work on multilayer perceptrons~\citep{wood1996unifying}. 
Graph neural networks (GNNs) (e.g., \citep{duvenaud2015convolutional,  gilmer2017neural,kipf2016semi, hamilton2017inductive,xu2018powerful, scarselli2008graph} among others) exploit this approach in tasks such as node and whole-graph classification.
\cite{morris2019weisfeiler} proposes a higher-order Weisfeller-Lehman GNN (WL-$k$-GNN), which is shown to get better accuracy in graph classification tasks than traditional (WL-1) GNNs.
Unfortunately, \cite{morris2019weisfeiler} focused only on graph-wide tasks, missing the fact that WL-2 GNN should be able to also perform link prediction tasks (\Cref{thm:jointpred}), unlike WL-1 GNNs.
More recently, graph neural networks have also been employed towards relational reasoning as well as matrix completion tasks \citep{schlichtkrull2018modeling, zhang2018link, battaglia2018relational, berg2017graph, monti2017geometric}. However, these GNN's, in general, learn node embeddings rather than structural node representations, which are then exploited towards link prediction.
GNN-like architectures have been used to simulate dynamic programming algorithms~\citep{xu2019can}, which is unrelated to graphs and outside the scope of this work.

%


%
%
%

To the best of our knowledge, our work is the first to provide the theoretical foundations connecting node embeddings and structural representations.
A few recent works have classified node embedding and graph representation methods arguing them to be fundamentally different (e.g.,~\cite{hamilton2017representation,rossi2019community,wu2019comprehensive,zhou2018graph}).
Rather, our work shows that these are actually equivalent for downstream classification tasks, with the difference being that one is a Monte Carlo method (embedding) and the other one is deterministic (representation).


%


{\em Inductive vs Transductive Approaches: }
Another common misconception our work uncovers, is that of qualifying node embedding methods as transductive learning and graph representation ones as inductive (e.g.,~\cite{hamilton2017inductive,yang2016revisiting}).
In their original definitions, transductive learning \citep{gammerman1998learning}, \citep{zhu2003semi}, \citep{zhou2004learning} and inductive learning \citep{michalski1983theory}, \citep{belkin2006manifold} are only to be distinguished on the basis of generalizability of the learned model to unobserved instances.
However, this has commonly been misinterpreted as node embeddings methods being transductive and structural representations being inductive.
Models which depend solely only on the input feature vectors and the immediate neighborhood structure have been classified as inductive, whereas methods which rely on  positional node embeddings to classify relationships in a graph have been incorrectly qualified as transductive.

The confusion seems to be rooted in researchers trying to use a single sample of a node embedding method and failing to generalize.
\Cref{cor:tansdinduc} resolves this confusion by showing that transductive and inductive learning are fundamentally unrelated to positional node embeddings and graph representations.
Both node embeddings and structural representations can be inductive if they can detect interesting conceptual patterns or reveal structure in the data. 
The theory provided by our work strongly adheres to this definition.
Our work additionally provides the theoretical foundation behind the performance gains seen by \cite{epasto2019single} and \cite{goyal2019graph}, which employ an ensemble of node embeddings for node classification tasks.



\vspace{-10pt}
\section{Conclusions}
\vspace{-10pt}
This work provided an invaluable unifying theoretical framework for node embeddings and structural graph representations, bridging methods like SVD and graph neural networks.
Using invariant theory, we have shown (both theoretically and empirically) that relationship between structural representations and node embeddings is analogous to that of a distribution and its samples.
We proved that all tasks that can be performed by node embeddings can also be performed by structural representations and vice-versa.
Our empirical results show that node embeddings can be successfully used as inductive learning methods using our framework, and that non-GNN node embedding methods can be significantly more accurate in most tasks than simple GNNs methods.
Our work introduced new practical guidelines to the use of node embeddings, which we expect will replace today's na\"ive direct use of node embeddings in graph tasks.

\vspace{-10pt}
\subsubsection*{Acknowledgments}
\vspace{-10pt}
This work was sponsored in part by the ARO, under the U.S. Army Research Laboratory contract number W911NF-09-2-0053, the Purdue Integrative Data Science Initiative and the Purdue Research foundation, the DOD through SERC under contract number HQ0034-13-D-0004 RT \# 206, and the National Science Foundation under contract number CCF-1918483.

\bibliographystyle{iclr2020_conference}
\bibliography{main.bib}

\newpage
\setcounter{lemma}{0}
\setcounter{theorem}{0}
\setcounter{proposition}{0}
\setcounter{corollary}{0}
\section*{Appendix}
\section{A visual interpretation of node embeddings and structural representations}
\label{sec:abstractView}
In what follows we showcase the difference between structural representations (\Cref{fig:example_gnn}) and positional node embeddings (\Cref{fig:example_node}) obtained by graph neural networks and SVD, respectively, on the same graph.
The graph is the food web of \Cref{fig:example} and consists of two disconnected subgraphs. 


%
\Cref{fig:example_gnn} shows a 2-dimensional ($\R^2$) {\em structural representation} of nodes obtained by a standard GNN (obtained by the 1-WL GNN of \citet{xu2018powerful}), optimized to try to predict links without the addition of any random edges).
Node colors represent the mapping of the learned structural representation in $\R^2$ into the red and blue interval of RGB space $[0,255]^2$.  
Note that isomorphic nodes are forcibly mapped into the same structural representations (have the same color), even though the representation is trained to predict links.
Specifically, the lynx and the orca in \Cref{fig:example_gnn} get the same color since they are isomorphic, while the lynx and the coyote have different structural representations (different colors).
The lynx, like the orca, is a top predator in the food web while the coyote is not a top predator. 
Hence, it is quite evident why GNN's are not traditionally used for link prediction: as the structural node representation is a color, using these representations is akin to asking ``is a light pink node in \Cref{fig:example_gnn}  connected to a light blue node?'' 
We cannot answer this question unless we also specify which light pink (i.e. coyote or seal) and which light blue (i.e. orca or lynx) we are talking about.
Hence, the failure to predict links.
Our  \Cref{cor:linkpred} shows that joint 2-node structural representations are required for link prediction.
Our \Cref{thm:jointpred} generalizes it to joint $k$-node structural representations for any $k$-node joint prediction task.

\begin{figure}[h!]
	\begin{center}
        \includegraphics[height=2in, width=5in]{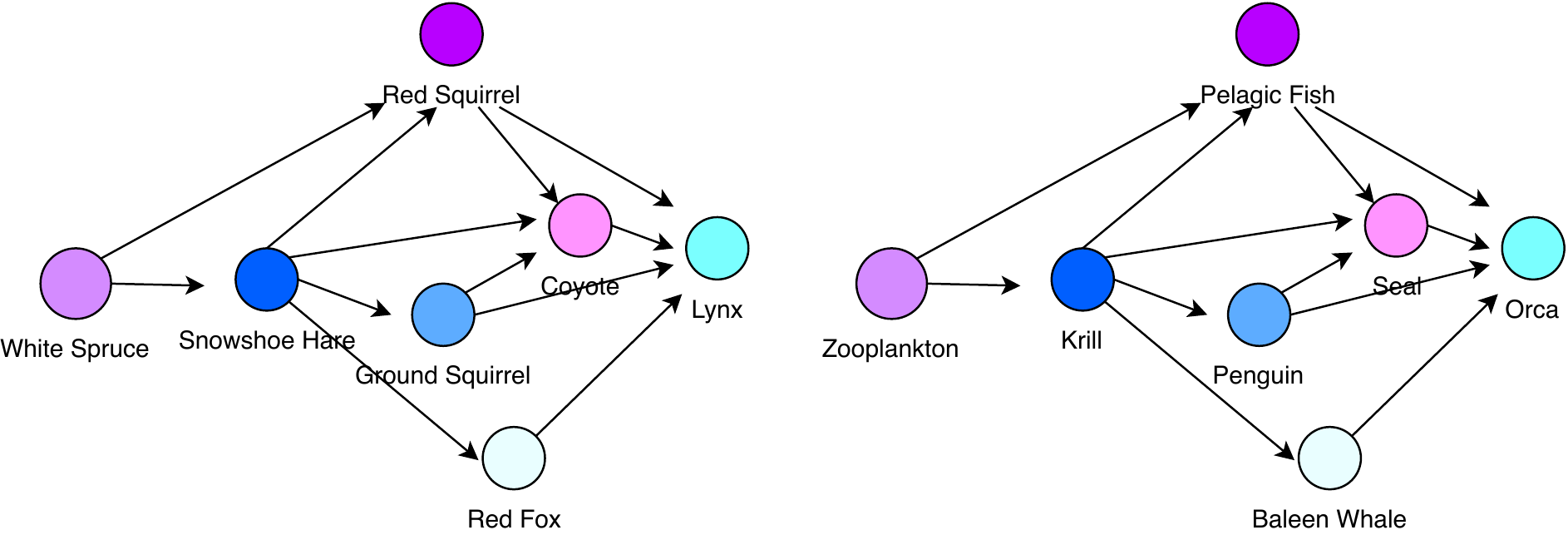}
		\vspace{-10pt}
	\end{center}
	\caption{(Best in color) Food web graph of \Cref{fig:example}  with node colors that map a two dimensional representation from a 1-WL GNN (GIN - \citep{xu2018powerful}) into the $[0, 255]$ interval of blue and red intensity of RGB colors respectively. GIN is used a representative of structural representation of nodes. Structurally isomorphic nodes, obtain the same representation and are hence end up being visualised with the same color. Consequently, it is clear why structural node representations are used for node and graph classification, but not for link prediction.}
	\label{fig:example_gnn}
\end{figure}

Positional node embeddings, on the other hand, are often seen as a lower-dimensional projection of the rows and columns of the adjacency matrix $\mA$ from $\R^n$ to $\R^d$, $d < n$, that preserves relative positions of the nodes in a graph. 
In \Cref{fig:example_node} we show the same food web graph, where now node colors map the values of the two leading (SVD) eigenvectors (of the undirected food web graph) into the $[0,255]$ interval of blue and red intensity of RGB colors, respectively. The graph is made undirected, otherwise the left and right eigenvectors will be different and harder to represent visually. SVD is used here as a representative of positional node embedding methods. 
Note the lynx and the coyote now have close positional node embeddings (represented by colors which are closer in the color spectrum), while the positional node embeddings of the lynx and the orca are significantly different.
Node embeddings are often seen as encoding the fact that the lynx and the coyote are part of a tightly-knit community, while the lynx and orca belong to distinct communities.

It is evident from \Cref{fig:example_node}  why positional node embeddings are not traditionally used to predict node classes: predicting that the lynx, like the orca, is a top predator based on their node colors is difficult, since the coyote's color is very similar to that of the lynx, while the orca obtains a completely different color to the lynx.
Relying on color shades (light and dark) for node classification is unreliable, since nodes with completely different structural positions in the food chain may have similar color shades. 
Our \Cref{thm:posleqstruc} shows, however, that any positional node embedding (actually, any node embedding) must be a sample of a distribution given by the most-expressive structural representation of the node.
That is, if SVD is ran again over different permutations of the vertices in the adjacency matrix (an isomorphic graph to \Cref{fig:example}), the set of colors obtained by the lynx and the orca must be the same.
Hence, node classification is possible if we look at the distribution of colors that a node obtains from random permutations of the adjacency matrix.

On the other hand, link prediction using the colors in \Cref{fig:example_node} is rather trivial, since similar colors mean ``closeness'' in the graph.
For instance, we may easily predict that the baleen whale also eats zooplankton.


\begin{figure}[h!]
	\begin{center}
        \includegraphics[height=2in, width=5in]{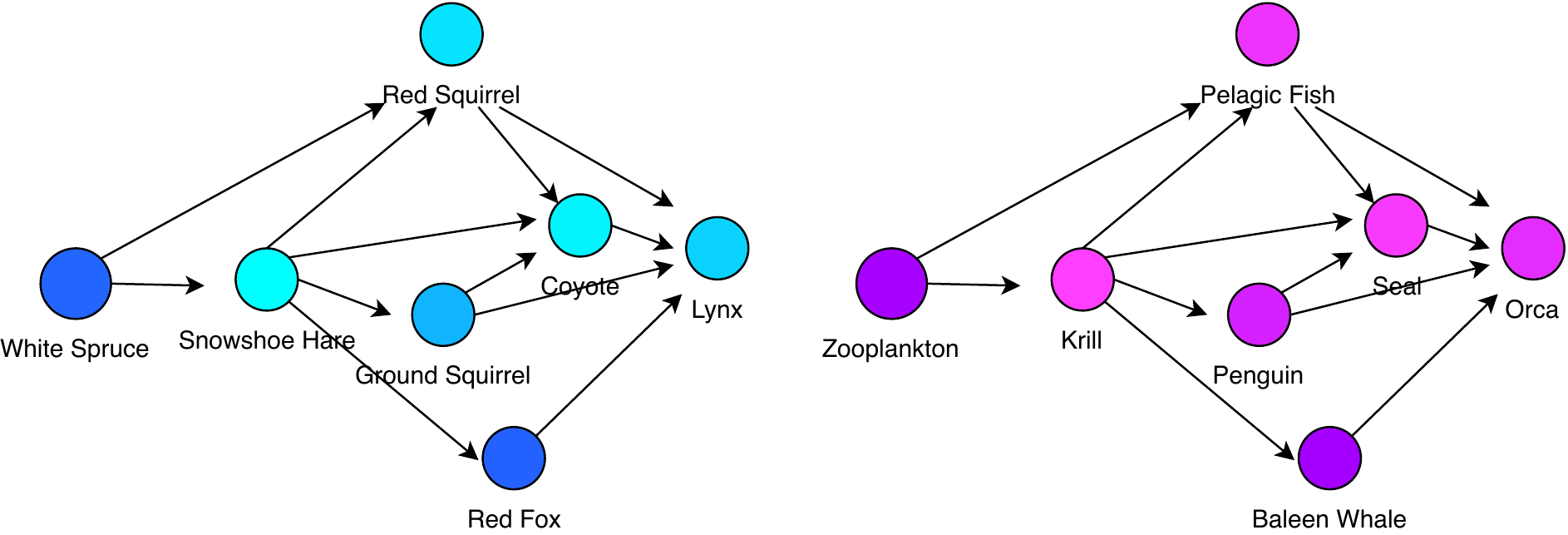}
		\vspace{-10pt}
	\end{center}
	\caption{(Best in color) Food web graph of \Cref{fig:example} with node colors that map the values of the two leading (SVD) eigenvectors over the undirected graph into the $[0,255]$ interval of blue and red intensity of RGB colors, respectively. SVD (run until convergence) is used as a representative of positional node embedding methods. The graph is made undirected, otherwise the left and right eigenvectors will be different and harder to represent visually. Note that nodes which are a part of the same connected component, obtain embeddings which are close in latent space, visually shown as similar colors. Consequently, it is clear that positional node embeddings can be used for link prediction and clustering.}
	\label{fig:example_node}
\end{figure}

\section{Preliminaries}

{\bf Noise outsourcing, representation learning, and graph representations:}
The description of our proofs starts with the equivalence between probability models of graphs and graph representations.
We start with the concept of noise outsourcing~\cite[Lemma 3.1]{austin2008exchangeable} applied to our task ---a weaker version of the more general concept of transfer~\cite[Theorem 6.10]{kallenberg2006foundations} in pushforward measures.

A probability law $\mZ|(\tA,\mX) \sim p(\cdot |(\tA,\mX))$, $(\tA,\mX) \in \Sigma_n$, can be described~\citep[Theorem 6.10]{kallenberg2006foundations} by pure random noise $\epsilon \sim \text{Uniform}(0,1)$ independent of $(\tA,\mX)$,  passing through a deterministic function $\mZ = f((\tA,\mX),\epsilon)$ ---where $f: \Sigma_n \times [0,1] \to \Omega$, where $\Omega$ in our task will be a matrix $\Omega = \R^{n\times d}$ defining node representations, $d \geq 1$.
That is, the randomness in the conditional $p(z|(\tA,\mX))$ is entirely outsourced to  $\epsilon$, as $f$ is deterministic.

Now, consider replacing the graph $G=(\tA,\mX)$ by a $\mathcal{G}$-equivariant representation $\Gamma(\tA,\mX)$ of its nodes, the output of a neural network $\Gamma:\Sigma_n \to \mathbb{R}^{n\times m}$, $m \geq 1$, that gives an representation to each node in $G$.
If a representation $\Gamma^\star(\tA,\mX)$ is such that $\exists f'$ where $\mZ = f(\tA,\mX,\epsilon) = f'(\Gamma^\star(\tA,\mX),\epsilon)$, $\forall (\tA,\mX) \in \Sigma_n$ and $\forall \epsilon \in [0,1]$,
then $\Gamma^\star(\tA,\mX)$ does not lose any information when it comes to predicting $\mZ$.
Statistically~\citep[Proposition 6.13]{kallenberg2006foundations}, $\mZ \indep_{\Gamma^\star(\tA,\mX)} (\tA,\mX)$.
We call $\Gamma^\star(\tA,\mX)$ a most-expressive representation of $G$ with respect to $\mZ$. 
A most-expressive representation (without qualifications) is one that is most-expressive for any target variable.

Representation learning is powerful precisely because it can learn functions $\Gamma$ that are compact and can encode most of the information available on the input. 
And because the most-expressive $\Gamma^\star$ is $\mathcal{G}$-equivariant, it also guarantees that any $\mathcal{G}$-equivariant function over $\Gamma^\star$ that outputs $\mZ$ is also $\mathcal{G}$-equivariant, without loss of information.

\section{Proof of Theorems, Lemmas and Corollaries}

We restate and prove  \textbf{\Cref{lem:nodeisorep}}
\begin{lemma}
Two nodes $v,u \in V$, have the same most-expressive structural representation $\JGammastar(v,\tA,\mX) = \JGammastar(u,\tA,\mX)$ iff $u$ and $v$ are isomorphic nodes in $G=(\tA,\mX)$ (\Cref{def:nodeiso}).
\end{lemma}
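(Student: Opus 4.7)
The plan is to prove the two directions separately, using only the definitions of structural representation (\Cref{def:struc}), most-expressive structural representation (\Cref{prop:mestruc}), and node isomorphism (\Cref{def:nodeiso}).

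For the ($\Leftarrow$) direction, assume $u$ and $v$ are isomorphic, meaning they lie in the same node orbit of $G=(\tA,\mX)$ under the action of automorphisms. Since $\Gamma^\star$ is, by definition, a structural node representation, \Cref{def:struc} directly stipulates that isomorphic nodes receive the same representation, so $\Gamma^\star(u,\tA,\mX) = \Gamma^\star(v,\tA,\mX)$. This direction is immediate and requires no use of the most-expressive qualifier.

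For the ($\Rightarrow$) direction, the most-expressive property (\Cref{prop:mestruc}) is essential. Suppose $\Gamma^\star(u,\tA,\mX) = \Gamma^\star(v,\tA,\mX)$. By \Cref{prop:mestruc}, there exists a bijective measurable map $\phi$ between the value $\Gamma^\star(w,\tA,\mX)$ and the node orbit of $w$ in $G$, for every $w \in V$. Applying this bijection to the common value of the representations yields a single orbit, so the orbit of $u$ must coincide with the orbit of $v$. By \Cref{def:nodeiso}, $u$ and $v$ are therefore isomorphic.

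The argument is essentially a bookkeeping exercise in the definitions; the main subtlety I anticipate is in phrasing the bijection correctly: the bijection is between the representation values and orbits within a fixed graph $(\tA,\mX)$, not across graphs, so the reader should be reminded that equality of representations at two nodes in the same graph forces equality of their orbits. Beyond that minor care, no technical obstacle is expected, and the proof can be kept to a short paragraph in the appendix.
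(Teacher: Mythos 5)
Your proof is correct and follows essentially the same route as the paper's: the forward direction rests on the bijective map of \Cref{prop:mestruc} forcing equal representations to correspond to equal orbits, and the reverse direction follows directly from the requirement in \Cref{def:struc} that any structural representation (most-expressive or not) assigns equal values to isomorphic nodes. The only cosmetic difference is that the paper phrases both directions as arguments by contradiction, whereas you argue them directly.
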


\begin{proof}
In this proof, we consider both directions.

($\Rightarrow$) Consider two nodes $v,u \in V$ which satisfy the condition, $\JGammastar(v,\tA,\mX) = \JGammastar(u,\tA,\mX)$ but are not isomorphic in $G=(\tA,\mX)$.
By contradiction, suppose $u$ and $v$ have different node orbits.
This is a contradiction, since the bijective mapping of \Cref{prop:mestruc} would have to take the same input and map them to different outputs.

($\Leftarrow$) By contradiction, consider the two nodes $u, v \in V$ which are isomorphic in $G=(\tA,\mX)$ but with different most expressive structural representations i.e. $\JGammastar(v,\tA,\mX)$ $\neq$ $\JGammastar(u,\tA,\mX)$. This is a contradiction, because as per Definition \ref{def:nodeiso} two nodes should have the same structural representation, which would imply the most expressive structural representation is not a structural representation. Hence, the two nodes should share the same most expressive structural representation.\end{proof}

Next, we restate  and prove \Cref{lem:causal}
\begin{lemma}[Causal modeling through noise outsourcing] 
Definition~1 of \citet{galles1998axiomatic} gives a causal model as a triplet 
$$
M = \langle U, V, F \rangle,
$$
where $U$ is a set of exogenous variables, $V$ is a set of endogenous variables, and $F$ is a set of functions, such that in the causal model, $v_i = f(\longvec{p\!a}_{i}, u)$ is the realization of random variable $V_i \in V$ and a sequence of random variables $\longvec{P\!\!A}_i$ with ${P\!\!A}_i \subseteq V\backslash V_i$ as the endogenous variable parents of variable $V_i$ as given by a directed acyclic graph.
Then, there exists a pure random noise $\epsilon$ and a set of (measurable) functions $\{g_u\}_{u \in U}$ such that for $V_i \in V$, $V_i$ can be equivalently defined as $v_i \stackrel{a.s.}{=} f(\vec{p\!a}_i, g_u(\epsilon_u))$, where $\epsilon_u$ has joint distribution $(\epsilon_{u})_{\forall u \in U} \stackrel{a.s.}{=} g'(\epsilon)$ for some Borel measurable function $g'$ and a random variable $\epsilon \sim \text{Uniform}(0,1)$. The latter defines $M$ via noise outsourcing~\citep{austin2008exchangeable}.  
\end{lemma}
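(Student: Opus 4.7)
The plan is to invoke noise-outsourcing (a.k.a.\ the transfer theorem; see \citet[Theorem~6.10]{kallenberg2006foundations} and \citet[Lemma~3.1]{austin2008exchangeable}) twice. First, I would apply it to each exogenous variable separately: because each $u \in U$ is a random element of a standard Borel space, there exist a measurable function $g_u$ and a random variable $\epsilon_u \sim \text{Uniform}(0,1)$ with $u \stackrel{a.s.}{=} g_u(\epsilon_u)$. This replaces each exogenous $u$ with a deterministic function of a uniform seed, while the joint distribution of the $\epsilon_u$'s encodes the original joint law of the exogenous variables.

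Next I would couple all these per-variable seeds to a single master uniform. The tuple $(\epsilon_u)_{u \in U}$ is a random element of the standard Borel product space $[0,1]^U$ (which is Polish whenever $U$ is countable), so a second application of noise outsourcing yields a Borel measurable $g' : [0,1] \to [0,1]^U$ and a single $\epsilon \sim \text{Uniform}(0,1)$ with $(\epsilon_u)_{u \in U} \stackrel{a.s.}{=} g'(\epsilon)$. Substituting into the structural equations $v_i = f(\vec{pa}_i, u)$ and using measurability of $f$ and $g_u$ yields the desired almost-sure identity $v_i \stackrel{a.s.}{=} f(\vec{pa}_i, g_u(\epsilon_u))$ for each $V_i \in V$, with the joint law of the $\epsilon_u$'s factoring through a single uniform seed $\epsilon$ independent of the endogenous structure.

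The main (and essentially only) obstacle is measure-theoretic regularity: noise outsourcing requires the target spaces to be standard Borel and the ambient probability space to support an independent Uniform$(0,1)$. For causal models of the kind used throughout the paper --- finitely many exogenous variables valued in $\mathbb{R}^{k}$ or in discrete/Polish spaces --- these conditions hold trivially and the argument reduces to bookkeeping. The interesting content of the lemma is conceptual rather than technical: it shows that the axiomatic-counterfactual formulation of \citet{galles1998axiomatic} and the noise-outsourced formulation describe the same stochastic object, which is what permits \Cref{thm:jointpred} to express any causal target as a deterministic function of $\JGammastar$, $\tA$, $\mX$, and a pure noise seed $\epsilon$.
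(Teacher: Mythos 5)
Your proposal is correct and follows essentially the same route as the paper: both rest entirely on the Austin/Kallenberg noise-outsourcing (transfer) result applied to the exogenous variables, followed by substitution into the structural equations $v_i = f(\vec{p\!a}_i,\cdot)$. The only cosmetic difference is the order of bookkeeping --- you outsource each exogenous variable to its own uniform seed and then couple the seeds to a single master uniform via a second application, whereas the paper outsources the joint law $(u)_{u \in U} \stackrel{a.s.}{=} g'(\epsilon)$ in one application and then obtains the per-variable form trivially by taking $g_u$ to be the identity --- which does not change the substance of the argument.
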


\begin{proof}
By Definition 1 of \citet{galles1998axiomatic}, the set of exogenous variables $U$ are not affected by the set of endogeneous variables $V$.
%
The noise outsourcing lemma (Lemma 3.1) of \citet{austin2008exchangeable} (or its more complete version Theorem 6.10 of \citet{kallenberg2006foundations}) shows that any samples of a joint distribution over a set of random variables $U$ can be described as $(u)_{\forall u \in U} \stackrel{a.s.}{=} g(\epsilon)$, for some Borel measurable function $g$ and a random variable $\epsilon \sim \text{Uniform}(0,1)$.
As the composition of two Borel measurable functions is also Borel measurable, it is trivial to show that there exists Borel measurable functions $\{g_u\}_{u \in U}$ and $g'$, such that $u  \stackrel{a.s.}{=} g_u(\epsilon_u)$ and $(\epsilon_{u})_{\forall u \in U} \stackrel{a.s.}{=} g'(\epsilon)$.
The latter is trivial since $g_u$ can just be the identity function, $g_u(x)=x$.
\end{proof}


Next, we restate and prove {\bf \Cref{lem:isoZ}}
\begin{lemma}
The permutation equivariance of $p$ in \Cref{def:pos} implies that, if two proper subsets of nodes $S_1, S_2 \subsetneq V$ are isomorphic, then their marginal node embedding distributions must be the same up to a permutation, i.e., $p((\mZ_i)_{i\in S_1} |\tA,\mX)=\pi(p((\mZ_j)_{j\in S_2} |\tA,\mX))$ for some appropriate permutation $\pi$.
\end{lemma}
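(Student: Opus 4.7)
The plan is to exploit the fact that isomorphic node subsets are related by an automorphism of the graph, and then apply the $\mathcal{G}$-equivariance of $p$ directly.

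First I would unpack what ``$S_1, S_2 \in \cP^\star(V)\backslash V$ jointly isomorphic'' buys us, using \Cref{def:jointiso}: it means the two subsets lie in the same vertex subset orbit, so there exists an automorphism $\pi \in \Pi_n$ of $G=(\tA,\mX)$ with $\pi(\vec{S_1}) = \vec{S_2}$ and simultaneously $\pi(\tA)=\tA$ and $\pi(\mX)=\mX$. This is the key structural hook: the very same permutation that witnesses the isomorphism of subsets also fixes the graph data that appears on the right-hand side of the conditional in $p(\cdot\mid \tA,\mX)$.

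Next I would instantiate $\mathcal{G}$-equivariance of $p$ from \Cref{def:pos} at this particular $\pi$. The defining identity $\pi(p(\cdot\mid\tA,\mX)) = p(\cdot\mid\pi(\tA),\pi(\mX))$ reduces, since $\pi$ is an automorphism, to
\begin{equation*}
\pi(p(\cdot\mid\tA,\mX)) \;=\; p(\cdot\mid\tA,\mX),
\end{equation*}
i.e.\ the full joint law of $\mZ=(\mZ_v)_{v\in V}$ given $(\tA,\mX)$ is invariant under relabeling rows by $\pi$. Equivalently, if $\mZ \sim p(\cdot\mid\tA,\mX)$, then the permuted tensor $\mZ'$ with $\mZ'_{\pi(v)}=\mZ_v$ satisfies $\mZ'\stackrel{d}{=}\mZ$.

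Then I would marginalize. For the tuple indexed by $S_2$, observe that
\begin{equation*}
(\mZ'_i)_{i\in \vec{S_2}} \;=\; (\mZ_{\pi^{-1}(i)})_{i\in \vec{S_2}} \;=\; \pi\bigl((\mZ_j)_{j\in \vec{S_1}}\bigr),
\end{equation*}
where the last equality uses $\pi(\vec{S_1})=\vec{S_2}$. Combining this with the joint distributional equality $\mZ'\stackrel{d}{=}\mZ$ yields $(\mZ_i)_{i\in \vec{S_2}} \stackrel{d}{=} \pi\bigl((\mZ_j)_{j\in \vec{S_1}}\bigr)$, which rewritten in density form is exactly $p((\mZ_i)_{i\in S_1}\mid\tA,\mX) = \pi(p((\mZ_j)_{j\in S_2}\mid\tA,\mX))$ for the $\pi$ inherited from the automorphism; the ``appropriate'' permutation is precisely the restriction of the graph automorphism to the subsets.

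The main obstacle I anticipate is purely notational: being careful about whether $\pi$ acts on indices, rows, or distributions, and clarifying that marginalization commutes with the permutation action on the relevant coordinates. Once that bookkeeping is fixed, there is no real analytic content — the claim is essentially a one-line consequence of equivariance applied to an automorphism, and the lemma is really just a subset-level restatement of the symmetry already encoded in \Cref{def:pos}.
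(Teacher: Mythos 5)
Your proof is correct and follows essentially the same route as the paper's: identify the automorphism $\pi$ witnessing the joint isomorphism with $\pi(\vec{S_1})=\vec{S_2}$, apply the $\mathcal{G}$-equivariance of $p$ (which, since $\pi$ fixes $(\tA,\mX)$, gives invariance of the joint law), and marginalize onto the coordinates in the subsets. If anything, your version is more explicit than the paper's sketch about why the automorphism fixes the conditioning data and how marginalization commutes with the permutation action, but the underlying argument is the same.
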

\begin{proof}
From \Cref{def:pos}, it is trivial to observe that two isomorphic nodes $u, v \in V$ in graph $G = (\tA,\mX)$ have the same marginal node embedding distributions.
In this proof we extend this to node sets $S \subset V$ where $|S| > 1$.
We marginalize over $(Z_i)_{i \not\in S_1}$ to obtain $p((\mZ_i)_{i\in S_1} |\tA,\mX)$ and in the other case over $(Z_i)_{i \not\in S_2}$ to obtain $p((\mZ_i)_{i\in S_2} |\tA,\mX)$ respectively.

We consider 2 cases as follows:

Case 1: $S_1 = S_2$: This is the trivial where $S_1$ and $S_2$ are the exact same nodes, hence their marginal distributions are the identical as well by definition.

Case 2: $S_1 \neq S_2 $:  Since $S_1$ and $S_2$ are also given to be isomorphic, it is clear to see that every node in $S_1$ has an isomorphic equivalent in $S_2$. 
In a graph $G=(\tA,\mX)$, the above statement conveys that $S_2$ can be written as a permutation $\pi$ on $S_1$, i.e $S_2 = \pi(S_1)$. 
Now, employing \Cref{def:pos}, it is clear to see that $p((\mZ_i)_{i\in S_1} |\tA,\mX)=\pi(p((\mZ_j)_{j\in S_2} |\tA,\mX))$
\end{proof}

Next, we restate and prove \textbf{\Cref{thm:jointpred}}

\begin{theorem}
Let $\cS \subseteq \cP(V)$ be a set of subsets of $V$.
Let $\mY(\cS,\tA,\mX) = (Y(\vec{S},\tA,\mX))_{S \in \cS}$ be a sequence of random variables defined over the sets $S \in \cS$ of a graph $G=(\tA,\mX)$, such that we define $Y(\vec{S},\tA,\mX) := Y_{\vec{S}}|\tA,\mX$ and  
$Y(\vec{S}_1,\tA,\mX) \stackrel{d}{=} Y(\vec{S}_2,\tA,\mX)$ for any two jointly isomorphic subsets $S_1,S_2 \in \cS$ in $(\tA,\mX)$ (\Cref{def:jointiso}), where $\stackrel{d}{=}$ means equality in their marginal distributions.
Then, there exists a deterministic function $\varphi $ such that, $\mY(\cS,\tA,\mX) \stackrel{\text{a.s.}}{=} (\varphi (\JGammastar(\vec{S},\tA,\mX),\epsilon_S))_{S \in \cS}$,  where $\epsilon_S$ is a pure source of random noise from a joint distribution $p((\epsilon_{S'})_{\forall S' \in \cS})$ independent of $\tA$ and $\mX$.
\end{theorem}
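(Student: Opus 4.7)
My plan is to combine the noise-outsourcing representation of Lemma~\ref{lem:causal} with the bijective correspondence furnished by the most-expressive joint representation (Definition~\ref{prop:mejointGamma}). The core observation is that the isomorphism-invariance hypothesis on $\mY$ forces the conditional law of $Y(\vec{S},\tA,\mX)\mid(\tA,\mX)$ to depend on the triple $(\vec{S},\tA,\mX)$ only through the vertex-subset orbit of $S$ in $(\tA,\mX)$; by most-expressiveness, this orbit is in measurable bijection with $\JGammastar(\vec{S},\tA,\mX)$, so the law is a measurable functional of the representation. Noise outsourcing then upgrades this distributional equality into an almost-sure one of the desired form.

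\textbf{Steps.} First, I would apply Lemma~\ref{lem:causal} (equivalently, Theorem~6.10 of Kallenberg or Lemma~3.1 of Austin) to the joint conditional law $\mY(\cS,\tA,\mX)\mid(\tA,\mX)$ to obtain a Borel measurable $h$ and a uniform random variable $\epsilon \sim \text{Uniform}(0,1)$, independent of $(\tA,\mX)$, with $\mY(\cS,\tA,\mX)\stackrel{a.s.}{=} h(\tA,\mX,\epsilon)$. Standard measurable-selection arguments then let me split $\epsilon$ coordinate-wise into a family $(\epsilon_S)_{S\in\cS}$ with a joint distribution $p((\epsilon_{S'})_{S' \in \cS})$ independent of $(\tA,\mX)$, so that the $S$-coordinate of $h$ reads $\tilde{f}(\vec{S},\tA,\mX,\epsilon_S)$. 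Second, since $Y(\vec{S}_1,\tA,\mX)\stackrel{d}{=}Y(\vec{S}_2,\tA,\mX)$ for all jointly isomorphic $S_1,S_2$, the pushforward of $\epsilon_S$ under $\tilde{f}(\vec{S},\tA,\mX,\cdot)$ is constant on the orbit of $S$; a second application of noise outsourcing within each orbit class replaces $\tilde{f}$ by an equivalent function that depends on $(\vec{S},\tA,\mX)$ only through the orbit. Third, invoking the bijective measurable map of Definition~\ref{prop:mejointGamma} between the orbit of $S$ in $(\tA,\mX)$ and $\JGammastar(\vec{S},\tA,\mX)$, I precompose with its inverse to obtain a single Borel $\varphi$ such that $Y(\vec{S},\tA,\mX)\stackrel{a.s.}{=}\varphi(\JGammastar(\vec{S},\tA,\mX),\epsilon_S)$ for every $S\in\cS$.

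\textbf{Main obstacle.} The chief difficulty is producing a single function $\varphi$ that works uniformly across all $S\in\cS$, since subsets of different cardinalities live in different orbit strata and their representations can land in different components of the codomain of $\JGammastar$. I expect to resolve this via a stratum-by-stratum measurable selection, using the fact that the cardinality $|S|$ is itself recoverable from $\JGammastar(\vec{S},\tA,\mX)$ (it is an isomorphism invariant of $S$), so $\varphi$ can branch on it without requiring an extra input. A secondary subtlety is that the theorem permits $(\epsilon_S)_{S\in\cS}$ to be \emph{jointly} distributed rather than independent across $S$, which is precisely what the initial Kallenberg outsourcing delivers, so no extra construction is required on that front.
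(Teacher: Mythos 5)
Your proposal follows essentially the same route as the paper's own proof: noise outsourcing via Kallenberg's transfer theorem (\Cref{lem:causal}), the marginal-distribution-equality hypothesis to reduce the dependence of the conditional law to the vertex-subset orbit of $S$ in $(\tA,\mX)$, and the bijective measurable map of \Cref{prop:mejointGamma} to rewrite that orbit dependence as dependence on $\JGammastar(\vec{S},\tA,\mX)$. The only organizational difference is that you outsource the whole collection $\mY$ at once and split the noise into coordinates $(\epsilon_S)_{S\in\cS}$, whereas the paper argues pairwise over jointly isomorphic subsets and couples the noises via Kallenberg's Corollary~6.11 --- substantively the same argument at a comparable level of rigor.
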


\begin{proof}
The case $S=V$ is given in Theorem 12 of \cite{bloem2019probabilistic}.
The case $S=\emptyset$ is trivial.
The case $S \subsetneq V$, $S \neq \emptyset$, is described as follows with a constructive proof.
First consider the case of two isomorphic sets of nodes $S_1,S_2 \in \cS$.
As by definition $Y(\vec{S}_1,\tA,\mX) \stackrel{d}{=} Y(\vec{S}_2,\tA,\mX)$, we must assume  
$p(Y_{\vec{S}_1}|\tA,\mX) = p(Y_{\vec{S}_2}|\tA,\mX)$.
We can now use the transfer theorem \citep[Theorem 6.10]{kallenberg2006foundations} to obtain a joint description
$Y_{\vec{S}_1} | \tA,\mX \stackrel{\text{a.s.}}{=} \phi_1(\vec{S}_1,\tA,\mX,\epsilon)$ and
$Y_{\vec{S}_2} | \tA,\mX \stackrel{\text{a.s.}}{=} \phi_2(\vec{S}_2,\tA,\mX,\epsilon)$, where $\epsilon$ is a common source of independent noise.
As $S_1$ and $S_2$ are joint isomorphic (\Cref{def:jointiso}),
there exists an isomorphism $S_1=\text{iso}(\vec{S}_2),$ where $\phi_1(\text{iso}(\vec{S}_2),\tA,\mX, \epsilon)=\phi_1(\vec{S}_1,\tA,\mX, \epsilon)$.
Because the distribution given by $\phi_1(\cdot,\epsilon)$ must be isomorphic-invariant in $(\tA,\mX)$ and $S_1$ and $S_2$ are also isomorphic in $(\tA,\tX)$ then, for all permutation actions $\pi \in \Pi_n$,  there exists a new isomorphism $\text{iso}'$ such that 
$\phi_1(\pi(\text{iso}(\vec{S}_2)),\pi(\tA),\pi(\mX), \epsilon)\stackrel{d}{=}\phi_1(\text{iso}'(\pi(\vec{S}_1)),\pi(\tA),\pi(\mX), \epsilon)$,
which allows us to create a function $\varphi'$ that incorporates $\text{iso}'$ into $\phi_1$.
Due to the isomorphism between $S_1$ and $S_2$, we can do the same process for $S_2$ to arrive at the same function $\varphi'$.
We can now apply Corollary 6.11  \citep{kallenberg2006foundations} over $(Y_{\vec{S}_1} | \tA,\mX,Y_{\vec{S}_2} | \tA,\mX)$  along with a measure-preserving mapping $f$ to show that  $Y_{\vec{S}_1} | \tA,\mX \stackrel{a.s.}{=} \varphi'(\vec{S}_1,\tA,\tX,\epsilon_1)$ and 
$Y_{\vec{S}_2} | \tA,\mX \stackrel{a.s.}{=} \varphi'(\vec{S}_2,\tA,\tX,\epsilon_2),$ where
$(\epsilon_1,\epsilon_2) = f(\epsilon)$.
If $S_1$ and $S_2$ are not joint isomorphic, we can simply define $\varphi'(S_i,\cdot) := \phi_i(S_i,\cdot)$.
\Cref{prop:mejointGamma} allows us to define a function $\varphi$ from which we rewrite $\varphi'(\vec{S}_i,\tA,\tX,\epsilon_i)$ as $\varphi(\Gamma^\star(\vec{S}_i,\tA,\tX),\epsilon_i)$. 
Applying the same procedure to all $S \in \cS$ concludes our proof.
\end{proof}

Next, we restate and prove \textbf{\Cref{thm:posleqstruc}}

\begin{theorem}[The statistical equivalence between node embeddings and structural representations] 
Let $\mY(\cS,\tA,\mX) = (Y(\vec{S},\tA,\mX))_{S \in \cS}$ be as in \Cref{thm:jointpred}.
Consider a graph $G=(\tA,\mX) \in \Sigma_n$.
Let $\JGammastar(\vec{S},\tA,\mX)$ be a most-expressive structural representation of nodes $S \in \cS$ in $(\tA,\mX)$.
Then,
$$Y(\vec{S},\tA,\mX) \indep_{\JGammastar(\vec{S},\tA,\mX)} \mZ |\tA,\mX, \quad \forall S \in \cS,$$ for any node embedding matrix $\mZ$ that satisfies \Cref{def:pos}, where $A \indep_B C$ means $A$ is independent of $C$ given $B$.
Finally, $\forall (\tA,\mX) \in \Sigma_n$, there exists a most-expressive node embedding $\mZ^\star|\tA,\mX$ such that, $$\JGammastar(\vec{S},\tA,\mX) = \E_{\mZ^\star}[f^{(|S|)}((\mZ^\star_v)_{v \in S})|\tA,\mX], \quad \forall S \in \cS,$$ for some appropriate collection of functions $\{f^{(k)}(\cdot)\}_{k=1,\ldots,n}$.  
\end{theorem}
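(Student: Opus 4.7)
The plan is to prove the theorem in two stages, treating the conditional independence statement and the explicit construction of $\mZ^\star$ separately.

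For the conditional independence, I would first apply \Cref{thm:jointpred} to write $Y(\vec{S},\tA,\mX) \stackrel{\text{a.s.}}{=} \varphi\bigl(\JGammastar(\vec{S},\tA,\mX),\epsilon_S\bigr)$ with $\epsilon_S$ independent of $(\tA,\mX)$, and use noise outsourcing (as in \Cref{lem:causal}) to write $\mZ \stackrel{\text{a.s.}}{=} g(\tA,\mX,\omega)$ with $\omega$ independent of $(\tA,\mX)$. Enlarging the probability space so $\omega \perp \epsilon_S$, conditional on $(\tA,\mX)$ the target $Y$ becomes a measurable function of $\epsilon_S$ alone while $\mZ$ becomes a measurable function of $\omega$ alone, yielding $Y \indep \mZ \mid \tA,\mX$. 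Since $\JGammastar$ is a deterministic function of $(\tA,\mX)$, adding it to the conditioning set is vacuous, which is precisely the stated $Y \indep_{\JGammastar} \mZ \mid \tA,\mX$.

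For the construction, my plan is to exhibit a ``random labeling'' embedding in the spirit of Relational Pooling \citep{murphy2019relational}. I would draw $\pi$ uniformly from $\Pi_n$ independently of everything else, and define, for each $v \in V$,
\[
\mZ^\star_v \;:=\; \mathrm{enc}\bigl(\pi(v),\,\pi(\tA),\,\pi(\mX)\bigr),
\]
where $\mathrm{enc}$ is any Borel bijection into $\R^d$ for $d$ large enough. I would then check that this distribution is $\mathcal{G}$-equivariant per \Cref{def:pos} using left-invariance of the uniform measure on $\Pi_n$: relabeling $(\tA,\mX)$ by $\sigma$ amounts to reindexing $\pi \to \pi\circ\sigma^{-1}$, which is still uniform. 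I would then define $f^{(k)}$ by decoding $(\pi(v))_{v \in S}$ together with the common payload $(\pi(\tA),\pi(\mX))$ from $(\mZ^\star_v)_{v \in S}$ and setting
\[
f^{(|S|)}\bigl((\mZ^\star_v)_{v \in S}\bigr) \;:=\; \JGammastar\bigl(\pi(\vec{S}),\pi(\tA),\pi(\mX)\bigr).
\]
By the $\mathcal{G}$-invariance of $\JGammastar$ this quantity equals $\JGammastar(\vec{S},\tA,\mX)$ almost surely, so its conditional expectation over $\pi$ given $(\tA,\mX)$ is again $\JGammastar(\vec{S},\tA,\mX)$, as required.

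The main obstacle I anticipate is verifying rigorously that the distribution of $\mZ^\star$ satisfies the equivariance in \Cref{def:pos}, and that $f^{(k)}$ is a well-defined measurable function of $(\mZ^\star_v)_{v \in S}$ alone without covertly smuggling in $(\tA,\mX)$. The former reduces to carefully tracking how the permutation action interacts with the uniform distribution of $\pi$; the latter follows from the bijectivity of $\mathrm{enc}$ but should be written out so that the payload can be recovered from any single coordinate of the tuple. Once these are settled, the remainder is immediate algebra.
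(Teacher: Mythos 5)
Your proposal is correct, and it splits the same way the paper does: the conditional-independence half is essentially the paper's own argument (express $Y(\vec{S},\tA,\mX)$ via \Cref{thm:jointpred} as $\varphi(\JGammastar,\epsilon_S)$ and conclude independence from the embedding because the two noise sources are independent, the paper doing this in one line via Proposition 6.13 of Kallenberg); note that your ``enlarge the space so $\omega\perp\epsilon_S$'' step is the same implicit assumption the paper makes, namely that the embedding's algorithmic randomness is exogenous to the labels' noise---you just state it explicitly. The second half, however, takes a genuinely different route. The paper defines $\mZ^\star|\tA,\mX$ as the uniform distribution over the orbit $\{(\JGammastar(v,\tA,[\mX,\pi(1,\ldots,n)^T]))_{v\in V}\}_{\pi\in\Pi_n}$, i.e.\ each sample is a vector of \emph{structural node representations of the id-augmented graph} (in the spirit of RP-GNN), and then proves existence of $f^{(|S|)}$ indirectly: a surjection argument between the restricted multisets $\cO_S(\tA,\mX)$ and $\JGammastar(\vec{S},\tA,\mX)$, followed by the sum-decomposability of multiset functions of \citet{wagstaff2019limitations}. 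You instead pack the entire permuted graph $(\pi(v),\pi(\tA),\pi(\mX))$ into every coordinate via a Borel bijection and let $f^{(|S|)}$ decode it and evaluate $\JGammastar$, so the integrand is a.s.\ constant and the expectation is trivial; your equivariance check via $\pi\mapsto\pi\circ\sigma^{-1}$ is the right one, and the decoder is measurable and graph-independent as required. What each buys: your construction is shorter, fully explicit, and avoids the multiset machinery, at the cost of being degenerate---the Monte Carlo average does no work, and each ``embedding'' is just an encrypted copy of the graph---whereas the paper's construction keeps the per-node embeddings nontrivial and shows that averaging a function over genuinely varying samples recovers $\JGammastar$, which is the mechanism its experiments (MC-SVD, multiple samples) are designed to illustrate. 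One small addition you should make: say a sentence on why your $\mZ^\star$ is \emph{most-expressive} (immediate, since each coordinate determines the permuted graph), as the theorem asserts that property of $\mZ^\star$ and the paper addresses it explicitly.
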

\begin{proof}
In the first part of the proof, for any embedding distribution $p(\mZ|\tA,\mX)$, we note that by \Cref{thm:jointpred}, $y(\vec{S},\tA,\mX) \stackrel{a.s.}{=} f'(\JGammastar(\vec{S},\tA,\mX),\epsilon_S)$. Hence, 
$Y(\vec{S},\tA,\mX) \indep_{\JGammastar(\vec{S},\tA,\mX)} \mZ |\tA,\mX,$ $\forall S \in \cS$, is a direct consequence of  Proposition 6.13 in \citep{kallenberg2006foundations}.

In the second part of the proof, we construct an orbit over a  most-expressive representation of a graph $(\tA,\mX)$ of size $n$, with permutations that act only on unique node ids (node orderings) added  as node features: $\Pi'(\tA,\mX) = \{((\JGammastar(v,\tA,[\mX,\pi(1,\ldots,n)^T]))_{\forall v \in V}\}_{\forall \pi \in \Pi_n}$, where $[A,b]$ concatenates column vector $b$ as a column of matrix $A$.
Define $\mZ^\star | \tA,\mX$ as the random variable with a uniform measure over the set $\Pi'(\tA,\mX)$.
We first prove that $\mZ^\star | \tA,\mX$ is a most-expressive node embedding.
Clearly, $\mZ^\star | \tA,\mX$ is a node embedding, since the uniform measure over $\Pi'(\tA,\mX)$ is $\mathcal{G}$-equivariant.
All that is left to show is that we can construct $\JGammastar$ of any-size subset $S \in \cS$ from $\mZ^\star | \tA,\mX$ via 
$$\JGammastar(\vec{S},\tA,\mX) = \E_{\mZ^\star}[f^{(|S|)}((\mZ^\star_v)_{v \in S})|\tA,\mX],$$
for some function $f^{(|S|)}$.
This part of the proof has a constructive argument and comes in two parts.

Assume $S \in \cS$ has no other joint isomorphic set of nodes in $\cS$, i.e., $\nexists S_2 \in \cS$ such that $S$ and $S_2$ are joint isomorphic in $(\tA,\mX)$. 
For any such subset of nodes $S \in \cS$, and any element $R_{\pi} \in \Pi'(\tA,\mX)$, there is a bijective measurable  map between the nodes in $S$ and their positions in the representation vector $R_\pi = (\JGammastar(v,\tA,[\mX,\pi(1,\ldots,n)^T]))_{\forall v \in V}$, since all node features are unique and there are no isomorphic nodes under such conditions.
Consider the multiset 
$$
\cO_S(\tA,\mX) := \{(\JGammastar(v,\tA,[\mX,\pi(1,\ldots,n)^T]))_{\forall v \in S}\}_{\forall \pi \in \Pi_n}
$$
of the representations restricted to the set $S$.
We now show that there exists an surjection between $\cO_S(\tA,\mX)$ and $\JGammastar(\vec{S},\tA,\mX)$.
There is a surjection if for all $S_1,S_2 \in \cP^\star(V)$ that are non-isomorphic, it implies $\cO_{S_1}(\tA,\mX) \neq \cO_{S_2}(\tA,\mX)$.
The condition is trivial if $|S_1|\neq |S_2|$ as $|\cO_{S_1}(\tA,\mX)| \neq |\cO_{S_2}(\tA,\mX)|$.
If $|S_1|= |S_2|$, we prove by contradiction.
Assume $\cO_{S_1}(\tA,\mX) = \cO_{S_2}(\tA,\mX)$.
Because of the unique feature ids and because $\JGammastar$ is most-expressive, the representation $\JGammastar(v,\tA,[\mX,\pi(1,\ldots,n)^T])$ of node $v \in V$ and permutation $\pi \in \Pi_n$ is unique.
As $S_1$ is not isomorphic to $S_2$, and both sets have the same size, there must be at least one node $u \in S_1$ that has no isomorphic equivalent in $S_2$.
Hence, there exists $\pi \in \Pi_n$ that gives a unique representation $\JGammastar(u,\tA,[\mX,\pi(1,\ldots,n)^T])$ that does not have a matching $\JGammastar(v,\tA,[\mX,\pi(1,\ldots,n)^T])$ for any $v \in S_2$ and $\pi' \in \Pi_n$.
Therefore, $\exists a \in \cO_{S_1}(\tA,\mX)$, where $a \not \in \cO_{S_2}(\tA,\mX)$, which is a contradiction since we assumed $\cO_{S_1}(\tA,\mX) = \cO_{S_2}(\tA,\mX)$.

Now that we know there is such a surjection, a possible surjective measurable map between $\cO_S(\tA,\mX)$ and $\JGammastar(\vec{S},\tA,\mX)$ is a multiset function that takes $\cO_S(\tA,\mX)$ and outputs $\JGammastar(\vec{S},\tA,\mX)$.
For finite multisets whose elements are real numbers $\R$, \cite{wagstaff2019limitations} shows that a most-expressive multiset function can be defined as the average of a function $f^{(|S|)}$ over the multiset.
The elements of $\cO_S(\tA,\mX)$ are finite ordered sequences (ordered according to the permutation) and, thus, can be uniquely (bijectively) mapped to the real line with a measurable map, even when $\tA$ and $\mX$ have edge and node attributes defined over the real numbers $\R$. 
Thus, by~\cite{wagstaff2019limitations}, there exists some surjective function $f^{(|S|)}$ whose average over $\cO_S(\tA,\mX)$ give $\JGammastar(\vec{S},\tA,\mX)$.

Now assume $S_1,S_2 \subseteq V$ are joint isomorphic in $(\tA,\mX)$, $S_1,S_2 \neq \emptyset$.  
Then, we have concluded that $\cO_{S_1}(\tA,\mX) = \cO_{S_2}(\tA,\mX)$.
Fortunately, by \Cref{def:jointGamma}, this non-uniqueness is a required property of the structural representations of $\JGammastar(S_1,\tA,\mX)$ and $\JGammastar(S_2,\tA,\mX)$, which must satisfy $\JGammastar(S_1,\tA,\mX) = \JGammastar(S_2,\tA,\mX)$ if $S_1$ and $S_2$ are joint isomorphic,  which concludes our proof.
\end{proof}

Next, we restate and prove \textbf{\Cref{cor:factor}}
\begin{corollary}
The node embeddings in \Cref{def:pos} encompass embeddings given by matrix and tensor factorization methods ---such as Singular Value Decomposition (SVD), Non-negative Matrix Factorization (NMF), implicit matrix factorization (a.k.a.\ word2vec)--, latent embeddings given by Bayesian graph models ---such as Probabilistic Matrix Factorizations (PMFs) and variants---, variational autoencoder methods and graph neural networks that use random lighthouses to extract node embedddings.
\end{corollary}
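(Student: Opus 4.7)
The plan is to verify, case by case for each method listed in the statement, that the induced conditional law $p(\mZ \mid \tA, \mX)$ is $\mathcal{G}$-equivariant in the sense of Definition~\ref{def:pos}, i.e.\ $\pi(p(\cdot \mid \tA,\mX)) = p(\cdot \mid \pi(\tA),\pi(\mX))$ for every $\pi \in \Pi_n$. The common mechanism is that each method is specified by (i) an objective, prior/likelihood, or neural update that is itself invariant under simultaneous relabeling of nodes, together with (ii) a source of randomness (random initialization, posterior noise, variational sampling, or random lighthouses) that is exchangeable across nodes. Equivariance then follows from the standard fact that an equivariant map composed with an exchangeable noise source yields an equivariant conditional distribution.

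First, I would handle matrix and tensor factorization methods (SVD, NMF, implicit matrix factorization \`a la word2vec). The reconstruction loss $\|\mA - \mZ \mW^\top\|_F^2$ and its weighted/nonnegative/log-PMI analogues satisfy $L(\pi(\tA), \pi(\mZ), \pi(\mW)) = L(\tA, \mZ, \mW)$, so the set of optimizers is equivariant as a set. When the algorithm is initialized from a distribution that is invariant under node relabeling (e.g.\ i.i.d.\ Gaussian entries, or a uniform measure over an orbit of optima), the output distribution inherits equivariance; I will cite the standard references for each method (\citet{mnih2008probabilistic} for SVD/PMF style factorizations, \citet{lee2001algorithms} for NMF, \citet{levy2014neural} for the implicit factorization view of word2vec). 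Next, for Bayesian latent models (PMF and variants), the prior is i.i.d.\ across nodes and the likelihood is a function only of the adjacency tensor and latent factors in an equivariant way; hence the posterior $p(\mZ \mid \tA,\mX)$ is equivariant, and so is any Monte Carlo sampler targeting it.

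Third, for variational autoencoder style methods (e.g.\ graph VAEs) and for GNNs that augment the input with random ``lighthouse'' features, the argument is cleanest: the encoder is a $\mathcal{G}$-equivariant neural network, and either the reparameterized Gaussian noise or the uniform distribution over lighthouse assignments is exchangeable across the node index. Composing an equivariant deterministic map with an exchangeable noise distribution yields a $\mathcal{G}$-equivariant conditional law on $\mZ$, which is exactly the requirement of Definition~\ref{def:pos}. I would also note the remark in the main text that even for ``deterministic'' outputs the $\mathcal{G}$-equivariance holds because practical implementations depend on a randomized initial condition; the Dirac-delta case is the degenerate limit where the equivariance property is preserved trivially after averaging over the initial seed.

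The main obstacle I anticipate is being precise about the randomization source for genuinely deterministic factorization routines. For instance, SVD returns eigenvectors that are unique only up to sign (and up to basis within eigenspaces of multiplicity $> 1$), so one must either (a) randomize the input node ordering, or (b) treat the output as a random variable on the appropriate orbit (e.g.\ a uniform measure over sign flips and within-eigenspace rotations). I will explicitly state that the equivariance claim is made with respect to this randomized output distribution, which matches the MC-SVD procedure used later in Section~4. Once this convention is fixed, each case reduces to a one-line check that the deterministic part is equivariant and the random part is exchangeable, completing the proof.
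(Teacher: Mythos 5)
Your proposal is correct and follows essentially the same route as the paper's proof: a case-by-case check that each method's conditional law $p(\mZ\mid\tA,\mX)$ is $\mathcal{G}$-equivariant because the deterministic component (loss, likelihood, or encoder) is permutation-equivariant and the randomness (random input permutation or initialization for SVD/NMF/word2vec, posterior sampling for PMF-style models, variational noise for VAEs, random anchor selection for lighthouse GNNs) is exchangeable across nodes. Your explicit treatment of the SVD degeneracy (sign flips and within-eigenspace rotations, or randomizing the input ordering as in MC-SVD) matches the paper's remark that the eigenvector distribution over degenerate eigenspaces makes the output equivariant even when the raw algorithm depends on node ids.
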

\begin{proof}
In Probabilistic Matrix Factorization \citep{mnih2008probabilistic}, we have $\tA_{uv}$ $\sim$ $\mathcal{N}(Z^{T}_{u}Z_{v},\sigma_{a}^{2}\mI)$ where $Z_{u}$ $\sim$ $\mathcal{N}(0, \sigma^{2}\mI)$, $Z_{v}$ $\sim$ $\mathcal{N}(0, \sigma^{2}\mI)$. We note that the posterior of $P(\mZ|A)$ is clearly equivariant, satisfying \cref{def:pos}, as a permutation action on the nodes requires the same permutation on the $\sigma^2\mI$ matrix as well to obtain $\mZ$. The proof for Poisson Matrix Factorization \citep{gopalan2014bayesian, gopalan2014content} follows a similar construction to the above, wherein the Normal assumption is replaced by the Poisson distribution.

Moreover, any matrix factorization algorithm gives an equivariant distribution of embeddings if the input matrices are randomly permuted upon input.
Specifically, any Singular Value Decomposition (SVD) method satisfies \Cref{def:pos} as the distribution of the eigenvector solutions to degenerate singular values ---which are invariant to unitary rotations in the corresponding degenerate eigenspace--- will trivially be $\mathcal{G}$-equivariant even if the algorithm itself outputs values dependent on the node ids.
Same is true for non-negative matrix factorization~\citep{lee2001algorithms} and implicit matrix factorization~\citep{levy2014neural,mikolov2013distributed}.

 PGNN's \citep{you2019position} compute the shortest distances between every node of the graph with a predetermined set of `anchor' nodes to encode a distance metric. By definition, using such a distance metric would make the node embeddings learned by this technique $\mathcal{G}$-equivariant. The shortest path between all pairs of nodes in a graph can be seen equivalently as a function of a polynomial in $\tA^k$. Alternatively, this can also be represented using the adjacency matrix and computed using the Floyd-Warshall algorithm \citep{cormen2009introduction}. The shortest distance is thus a function of $\tA$ ignoring the node features $\mX$. Since the inputs to the GNN comprises of the distance metric, $\tA$ and $\mX$, the node embeddings $\mZ$ can equivalently seen as a function of $\tA$, $\mX$ and noise. The noise in this case is characterized by the randomized anchor set selection. 
 
 In variational auto-encoder models such as CVAE's, GVAE's, Graphite \citep{tang2019correlated, kipf2016variational, grover2018graphite} the latent representations $\mZ$ are learned either via a mean field approximation or are sampled independently of each other i.e. $\mZ$ $\sim$ $P(\cdot | \tA, \mX)$. We note that in the case of the mean field approximation, the probability distribution is a Dirac Delta. It is clear to see that the $\mZ$ learned in this case is $\mathcal{G}$-equivariant with respect to any permutation action of the nodes in the graph.
 
\end{proof}
 
Next, we restate and prove \textbf{\Cref{cor:linkpred}}

\begin{corollary}
The link prediction task between any two nodes $u,v\in V$ depends only on the most-expressive tuple representation $\JGammastar((u,v),\tA,\tX)$.
Moreover, $\JGammastar((u,v),\tA,\tX)$ always exists for any graph $(\tA,\mX)$ and nodes $(u,v)$.
Finally, given most-expressive node embeddings $\mZ^\star$, there exists a function $f$ such that $\JGammastar((u,v),\tA,\tX) = \E_{\mZ^\star}[f(\mZ^\star_u,\mZ^\star_v)]$, $\forall u,v$.
\end{corollary}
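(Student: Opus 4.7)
The plan is to obtain all three conclusions as specialized instances of the two main results already proved in the section, namely \Cref{thm:jointpred} and \Cref{thm:posleqstruc}, applied with $|S|=2$. Concretely, I would treat $\cS$ as the collection of $2$-node subsets (or, for directed-edge prediction, ordered pairs) and check that the hypotheses of the two theorems are satisfied.

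For the first statement, the link prediction target $Y((u,v),\tA,\mX)$ is, by construction, invariant under joint isomorphisms of the pair $(u,v)$: if $(u_1,v_1)$ and $(u_2,v_2)$ occupy the same vertex-subset orbit in $G$, they play identical structural roles, so the distribution of the predicted edge label must coincide, i.e.\ $Y((u_1,v_1),\tA,\mX) \stackrel{d}{=} Y((u_2,v_2),\tA,\mX)$. Hence \Cref{thm:jointpred} applies with $\cS = \{\{u,v\} : u,v \in V\}$ and yields a measurable $\varphi$ such that $Y((u,v),\tA,\mX) \stackrel{a.s.}{=} \varphi(\JGammastar((u,v),\tA,\mX),\epsilon_{(u,v)})$, proving that the task depends only on $\JGammastar((u,v),\tA,\mX)$.

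For the second statement, existence of $\JGammastar((u,v),\tA,\mX)$ is guaranteed by \Cref{prop:mejointGamma}, which asks only for a bijective measurable map from the joint representation onto the orbit of $\{u,v\}$ under $\Pi_n$. A canonical witness is the function that sends $(\{u,v\},\tA,\mX)$ to the equivalence class $\Pi_n(\{u,v\},\tA,\mX)$ itself (e.g.\ via a measurable selection of a canonical form), which is well-defined for every graph and every pair, so no additional assumptions on $G$ are needed.

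The third statement is a direct specialization of the second half of \Cref{thm:posleqstruc}. That theorem constructs a most-expressive node embedding $\mZ^\star\mid \tA,\mX$ and a collection $\{f^{(k)}\}_{k=1,\ldots,n}$ of measurable functions such that $\JGammastar(\vec{S},\tA,\mX) = \E_{\mZ^\star}[f^{(|S|)}((\mZ^\star_v)_{v\in S})\mid \tA,\mX]$ for every $S \in \cS$; taking $|S|=2$ and defining $f := f^{(2)}$ gives exactly $\JGammastar((u,v),\tA,\mX) = \E_{\mZ^\star}[f(\mZ^\star_u,\mZ^\star_v)]$. The only point of care here is verifying that the function $f^{(2)}$ produced by \Cref{thm:posleqstruc} is symmetric in its two arguments when $\JGammastar$ is defined on unordered pairs (and not symmetric when directed edges are considered); this follows because $f^{(2)}$ is constructed as a multiset function on the orbit $\cO_S(\tA,\mX)$, which is automatically invariant to the internal ordering of the elements of $S$.

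The main obstacle, if any, is purely bookkeeping: one must be careful about the directed versus undirected case, since the ``link'' $(u,v)$ can be interpreted either as an ordered pair or as the set $\{u,v\}$, and the corresponding $\JGammastar$ is built over vertex-subset orbits in one case and over ordered-tuple orbits in the other. In both cases the argument is the same modulo replacing $\cS$ by the appropriate family, and the symmetry (or lack thereof) of $f$ is then inherited from $\JGammastar$. Beyond this, all three conclusions are immediate corollaries of the already-proved theorems with $k=2$.
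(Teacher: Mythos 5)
Your proposal is correct and follows essentially the same route as the paper: the paper's own proof simply invokes \Cref{cor:higherorderpred} with $|S|=2$, which is itself a direct application of \Cref{thm:posleqstruc} (built on \Cref{thm:jointpred}), so your explicit unpacking into \Cref{thm:jointpred} for the task-dependence claim, \Cref{prop:mejointGamma} for existence, and the second half of \Cref{thm:posleqstruc} for the expectation formula is the same argument spelled out in more detail. Your added care about ordered versus unordered pairs and the symmetry of $f^{(2)}$ is a reasonable refinement the paper leaves implicit.
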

\begin{proof}
It is a consequence of \Cref{cor:higherorderpred} with $|S|=2$.
\end{proof}

Next, we restate and prove \textbf{\Cref{cor:higherorderpred}}
\begin{corollary}
Sample $\mZ$ according to \Cref{def:pos}.
Then, we can learn a $k$-node structural representation of a subset of $k$ nodes $S \subseteq V$, $|S|=k$, simply by learning a function $f^{(k)}$ whose average $\Gamma(\vec{S},\tA,\mX) = \E[f^{(k)}((Z_v)_{v \in S})]$ can be used to accurately predict $Y(\vec{S},\tA,\mX)$.
\end{corollary}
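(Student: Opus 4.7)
The plan is to obtain this corollary as a short deduction from the second conclusion of Theorem~\ref{thm:posleqstruc}, combined with Theorem~\ref{thm:jointpred}. First, I would verify that the right-hand side $\E[f^{(k)}((\mZ_v)_{v\in S})\mid\tA,\mX]$ is a legitimate joint structural representation in the sense of Definition~\ref{def:jointGamma}. By Definition~\ref{def:pos} the distribution $p(\mZ\mid\tA,\mX)$ is $\mathcal{G}$-equivariant, so Lemma~\ref{lem:isoZ} implies that for any two jointly isomorphic subsets $S_1,S_2\in\cP^\star(V)$ the marginals of $(\mZ_v)_{v\in S_1}$ and $(\mZ_v)_{v\in S_2}$ coincide up to a permutation of coordinates. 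Taking $f^{(k)}$ to act on the coordinates in the canonical ascending order induced by $\vec{S}$ and applying the change-of-variables formula, the conditional expectation is invariant under every $\pi\in\Pi_n$ and yields identical values on joint orbits, fulfilling Definition~\ref{def:jointGamma}.

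Next, I would invoke the constructive second part of Theorem~\ref{thm:posleqstruc}: it supplies a most-expressive node embedding $\mZ^\star\mid\tA,\mX$ and a family of measurable maps $\{f^{(k)}\}_{k=1}^n$ satisfying
\[
\JGammastar(\vec{S},\tA,\mX)=\E_{\mZ^\star}\!\left[f^{(|S|)}((\mZ^\star_v)_{v\in S})\,\big|\,\tA,\mX\right], \qquad \forall S\in\cP^\star(V).
\]
Combining this with Theorem~\ref{thm:jointpred}, the random variable $Y(\vec{S},\tA,\mX)$ admits the a.s.\ representation $Y(\vec{S},\tA,\mX)\stackrel{\text{a.s.}}{=}\varphi(\JGammastar(\vec{S},\tA,\mX),\epsilon_S)$ with exogenous noise $\epsilon_S$ independent of $(\tA,\mX)$. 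Hence $\JGammastar$ is a sufficient statistic for $Y(\vec{S},\tA,\mX)$, and reproducing $\JGammastar$ through the learned average suffices for accurate prediction. For a general $\mZ$ obeying Definition~\ref{def:pos}, the first step still produces a valid joint structural representation $\Gamma(\vec{S},\tA,\mX)=\E[f^{(k)}((\mZ_v)_{v\in S})\mid\tA,\mX]$; when $\mZ$ is taken to be the most-expressive $\mZ^\star$ of Theorem~\ref{thm:posleqstruc}, this $\Gamma$ coincides with $\JGammastar$ and the prediction claim follows.

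The main obstacle is conceptual rather than technical: one must read ``accurately predict'' in the sense afforded by Theorem~\ref{thm:jointpred}, namely that $Y(\vec{S},\tA,\mX)$ is a.s.\ a measurable function of $\JGammastar(\vec{S},\tA,\mX)$ and an independent noise source, so that matching the structural representation is all that is required. A minor bookkeeping concern is ensuring $f^{(k)}$ is applied in the sorted order dictated by $\vec{S}$, which is automatic once the same convention used in the proof of Theorem~\ref{thm:posleqstruc} is adopted. A separate, purely practical point—learnability of $f^{(k)}$—is not part of the information-theoretic statement and is deferred to the universal-approximation capabilities of the MLP-pooling architectures used in the experiments.
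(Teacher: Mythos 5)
Your proposal is correct in substance, and it leans on the same two pillars as the paper (Theorem~\ref{thm:posleqstruc} and Theorem~\ref{thm:jointpred}), but the route through them is organized differently. The paper's proof works with an \emph{arbitrary} $\mZ$ satisfying Definition~\ref{def:pos}: it writes $f^{(k)}((\mZ_v)_{v\in S})$ via noise outsourcing as a function of the structural representations and an exogenous noise $\epsilon_S$, assumes a maximally powerful (Janossy-pooling-style) $f^{(k)}$ that can absorb the within-set dependencies, and then eliminates the noise by taking the expectation, so that the average is a function of $\JGammastar(\vec{S},\tA,\mX)$ and hence a valid $k$-node structural representation whose expressiveness is inherited from $\mZ$. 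You instead (i) verify directly, via Lemma~\ref{lem:isoZ} and the $\mathcal{G}$-equivariance of $p(\mZ\mid\tA,\mX)$, that the conditional average is a legitimate joint structural representation for any $\mZ$, and (ii) invoke the constructive second conclusion of Theorem~\ref{thm:posleqstruc} (the most-expressive $\mZ^\star$ and the family $\{f^{(k)}\}$) together with Theorem~\ref{thm:jointpred} to get the ``accurate prediction'' claim. What each buys: your split is cleaner on predictive sufficiency and more candid that accuracy can only be guaranteed when the embedding is expressive enough (for a degenerate but still Definition-\ref{def:pos}-compliant $\mZ$, e.g.\ a constant embedding, the averaged representation is valid but uninformative, and neither argument can rescue it); the paper's version stays with the given $\mZ$ throughout, which matches the corollary's literal phrasing, but at the cost of the informal ``most powerful $f^{(k)}$'' assumption and of glossing over exactly this expressiveness caveat. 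Your bookkeeping remarks (applying $f^{(k)}$ in the order of $\vec{S}$, reading ``accurately predict'' through the noise-outsourcing lens of Theorem~\ref{thm:jointpred}) are consistent with the conventions used in the paper's proofs.
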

\begin{proof}
This proof is a direct application of \Cref{thm:posleqstruc} which shows the statistical equivalence between node embeddings and strucutral representations.

Note that $f^{(k)}((\mZ_v)_{v \in S})$ can equivalently be represented as  
$f^{(k)}(\varphi(\Gamma({v},\tA,\mX)_{v \in S}, \epsilon_S))$ using \Cref{thm:posleqstruc} and that the noise $\epsilon_S$ is marginalized from the noise distribution of \Cref{thm:jointpred}, still preserving equivariance. With an assumption of the most powerful  $f'^{(k)}$, which is able to capture dependencies within the node set \citep{murphy2018janossy} and noise $\epsilon_S$, we can replace the above with  $f'^{(k)}(\varphi(\Gamma({S},\tA,\mX), \epsilon_S))$ and subsequently compute an expectation over this function to eliminate the noise.

\end{proof}

Next, we restate and prove \textbf{\Cref{cor:tansdinduc}}
\begin{corollary}
Transductive and inductive learning are unrelated to the concepts of node embeddings and structural representations.
\end{corollary}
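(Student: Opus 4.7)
The plan is to decouple the two dichotomies by recalling their original meanings and then exhibiting, for each pairing, a concrete instantiation that inhabits both sides. Transductive versus inductive learning (in the sense of \citet{gammerman1998learning,michalski1983theory,belkin2006manifold}) is a statement about the \emph{learner}'s ability to generalize a hypothesis beyond the training instances; node embeddings versus structural representations, by contrast, is a statement about the \emph{representation}'s equivariance properties (\Cref{def:pos} vs.\ \Cref{def:struc}). A priori these are distinct axes, and what we need to show is that there is no implication in either direction.

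First I would argue that structural representations do not force inductive behavior: a learner that trains an MLP on top of $\Gamma^{\star}(v,\tA,\mX)$ for $v$ restricted to a single fixed graph $G=(\tA,\mX)$, without ever exposing the model to a second graph, is transductive in exactly the classical sense, even though its features are fully $\mathcal{G}$-invariant. Hence structural representations are compatible with transductive pipelines. Second, and this is the substantive direction, I would show that node embeddings do not force transductive behavior. Given a node embedding distribution $p(\mZ \mid \tA,\mX)$ satisfying \Cref{def:pos}, \Cref{thm:posleqstruc} guarantees the existence of functions $f^{(k)}$ such that
\[
\Gamma^{\star}(\vec{S},\tA,\mX) \;=\; \E_{\mZ^{\star}}\!\bigl[f^{(|S|)}((\mZ^{\star}_{v})_{v\in S}) \,\big|\, \tA,\mX\bigr],
\]
so on \emph{any} new graph $G'=(\tA',\mX')$ unseen at training time one may draw Monte Carlo samples $\mZ^{(1)},\ldots,\mZ^{(m)}\sim p(\cdot\mid \tA',\mX')$ and form the empirical average $\tfrac{1}{m}\sum_i f^{(|S|)}((\mZ^{(i)}_v)_{v\in S})$, recovering a structural representation on $G'$ and hence a predictor that generalizes inductively. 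This is precisely the MC-SVD construction used in the experimental section.

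Combining the two directions establishes that each of the four pairings (transductive/inductive)$\times$(node embedding/structural representation) is realizable, so neither axis implies or precludes any value of the other, which is exactly the content of the corollary. The only delicate step, and the one I would spend the most care on, is the inductive-node-embedding direction: one must emphasize that the $\mathcal{G}$-equivariance of $p(\cdot\mid\tA,\mX)$ is precisely the property that lets the Monte Carlo recipe transfer unchanged from the training graph to a fresh test graph, whereas the historical confusion arose from using a single Monte Carlo draw (see \Cref{lem:isoZ}), which destroys equivariance at the sample level and produces a fitted classifier that is permutation-sensitive and therefore fails to generalize. Making this observation explicit closes the conceptual gap that led the community to conflate the two dichotomies, and the remaining verifications are immediate from \Cref{thm:posleqstruc}.
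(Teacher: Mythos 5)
Your proposal is correct and rests on the same key ingredient as the paper's own proof, namely \Cref{thm:posleqstruc}: the paper argues that node embeddings and structural representations are interconvertible (representations as expectations over embedding samples, and embeddings as samples conditioned on the graph), hence achieve the same generalization performance on any task, and are therefore orthogonal to the transductive/inductive axis. Your write-up is simply a more explicit rendering of that same route --- spelling out the four-quadrant realizability, the transductive use of structural representations on a single graph, and the single-Monte-Carlo-sample origin of the historical confusion, which the paper relegates to the discussion around the corollary rather than its proof.
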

\begin{proof}
By \Cref{thm:posleqstruc}, we can build most-expressive any-size joint representations from node embeddings, and we can get node embeddings from any-size most-expressive joint representations.
Hence, given enough computational resources, node embeddings and graph representations can have the same generalization performance over any tasks. This shows they are unrelated with the concepts of transductive and inductive learning.
\end{proof}

Next, we restate and prove \textbf{\Cref{cor:RPGNN}}
\begin{corollary}
A node embeddings sampling scheme can increase the structural representation power of GNNs.
\end{corollary}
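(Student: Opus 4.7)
The plan is to invoke the statistical equivalence in \Cref{thm:posleqstruc} as the central engine. A standard message-passing GNN, taken as a deterministic function of $(\tA,\mX)$, outputs a $\mathcal{G}$-equivariant structural representation whose expressiveness is limited (e.g., to 1-WL for the usual architectures). To prove the corollary, I will exhibit a sampling scheme that, when composed with the GNN, yields a random output $\mZ\mid\tA,\mX$ satisfying \Cref{def:pos}, and then use the constructive second half of \Cref{thm:posleqstruc} to show that its Monte Carlo expectation $\E_{\mZ}[f^{(|S|)}((\mZ_v)_{v\in S})\mid \tA,\mX]$ realizes a strictly more expressive joint representation than the plain GNN alone.

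First I would formalise the sampling scheme: given a graph $(\tA,\mX)$, sample a uniformly random permutation $\pi\in\Pi_n$ and feed the GNN the augmented feature matrix $[\mX,\pi(1,\ldots,n)^{T}]$, returning its node outputs as $\mZ^{(\pi)}$. Because the input distribution on permutations is uniform (hence $\mathcal{G}$-equivariant) and the GNN itself is $\mathcal{G}$-equivariant, the induced conditional law of $\mZ\mid \tA,\mX$ is $\mathcal{G}$-equivariant, so $\mZ$ is a valid node embedding per \Cref{def:pos}. This mirrors the construction of $\mZ^\star$ used inside the proof of \Cref{thm:posleqstruc} and, in practice, is exactly the mechanism realised by RP-GNN~\citep{murphy2019relational}.

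Second, by the second half of \Cref{thm:posleqstruc}, when the underlying GNN is expressive enough to distinguish nodes under every unique id assignment (which holds because a GNN with injective aggregations becomes a most-expressive node identifier once all ids are unique), the averaged embedding can recover $\JGammastar(\vec{S},\tA,\mX)$ for every $S\in\cP^\star(V)$ via an appropriate $f^{(|S|)}$. Comparing this with the plain GNN, which cannot distinguish certain non-isomorphic nodes or subsets (e.g., pairs on regular graphs of the same degree sequence), gives a strict increase in expressive power, establishing the corollary. \Cref{lem:nodeisorep} and \Cref{def:jointGamma} can be invoked to certify that the averaged quantities are genuine structural representations (invariant on orbits) rather than positional artefacts.

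The main obstacle I anticipate is conceptual rather than technical: one must be careful that ``can increase'' is an existence statement, not a claim that every sampling scheme lifts every GNN to most-expressive. The tightest version of the argument therefore reduces to exhibiting one scheme that works, for which the random-id injection above, together with the already-proved identity $\JGammastar(\vec{S},\tA,\mX) = \E_{\mZ^\star}[f^{(|S|)}((\mZ^\star_v)_{v\in S})\mid\tA,\mX]$ of \Cref{thm:posleqstruc}, suffices. Finite-sample Monte Carlo error is not an obstacle to the statement itself (which concerns power, i.e., the population object), but would be the practically interesting follow-up; I would therefore keep the proof short and defer any quantitative variance bounds.
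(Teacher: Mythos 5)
Your proposal is correct and follows essentially the same route as the paper's proof: both argue directly from \Cref{thm:posleqstruc}, using random unique node ids appended to the features (i.e., the RP-GNN construction of \citet{murphy2019relational}) as a sampling scheme satisfying \Cref{def:pos}, and then averaging over samples to obtain structural representations strictly more powerful than the plain GNN's. Your version merely spells out more explicitly the equivariance check and the expressiveness caveat that the paper delegates to the cited RP-GNN result.
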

\begin{proof}
The proof follows as a direct consequence of \Cref{thm:posleqstruc}, along with \cite{murphy2019relational} which demonstrates RP-GNN as a concrete method to do so. 
More specifically, appending unique node ids to node features uniformly at random, makes the nodes unique, and can be seen as a strategy to obtain node embeddings which satisfy \Cref{def:pos} using GNN's.
By averaging over multiple such node embeddings gives us structural representations more powerful than that of standalone GNN's.

\end{proof}
\section{Colliding Graph Neural Networks (CGNNs)}
\label{sec:cgnn}
In this section we propose a new variational auto-encoder procedure to obtain node embeddings using neural networks, denoted Colliding Graph Neural Networks (CGNNs).
Our sole reason to propose a new auto-encoder method is because we want to test the expressiveness of node embedding auto-encoders ---and, unfortunately, existing auto-encoders, such as~\cite{grover2018graphite}, do not properly account for the dependencies introduced by the colliding variables in the graphical model.
In our experiments, shown later, we aggregate multiple node embedding sampled from CGNN to obtain structural representations of the corresponding nodes and node sets. 

\paragraph{Node Embedding Auto-encoder.} 
In CGNN's, we adopt a latent variable approach to learn node embeddings.
Corresponding to each evidence feature vector $\mX_{i,\cdot} \in \R^{k}$ $\forall$ $i \in V$, we introduce a latent variable $\mZ_{i,\cdot} \in \R^{k}$.
In addition, our graphical model also consists of observed variables $\tA_{i,j,\cdot}$ $\in$ $\R^{d}$ $\forall$ $i,j \in V$.
These are related through the joint distribution $p(\tA,\mX|\mZ)=\prod_{i,j \in V\times V} p(\tA_{i,j,\cdot}|\mZ_{i,\cdot},\mZ_{j,\cdot}) \prod_{h \in V} p(\mX_{h,\cdot}|\mZ_{h,\cdot})$, which is summarized by the Bayesian network in Figure \ref{fig:model} in the \Appendix.
Note that $\tA_{i,j,\cdot}$ is a collider, since it is observed and influenced by two hidden variables, $\mZ_{i,\cdot},\mZ_{j,\cdot}$.
A neural network is used to learn the joint probability via MCMC, in an unsupervised fashion. 
The model learns the parameters of the MCMC transition kernel via an unrolled Gibbs Sampler, a templated recurrent model (an MLP with shared weights across Gibbs sampling steps), partially inspired by \cite{fan2017recurrent}.

The unrolled Gibbs Sampler, starts with a normal distribution of the latent variables $\mZ_{i,\cdot}^{(0)}$, $\forall i \in V,$ with each $\mZ_{i,\cdot}^{(0)}$ $\sim \mathcal{N}({\bf 0,I})$ independently, where $I$ is the identity matrix.
Subsequently at time steps $t=1,2,\ldots$, in accordance with the graphical model, each variable $\mZ_{i,\cdot}^{(t)}$ is sequentially sampled from its true distribution conditioned on all observed edges of its corresponding node $i$, in addition to the most-up-to-date latent variables $\mZ$'s associated with its immediate neighbors. 
The reparametrization trick \citep{kingma2013auto} allows us to backpropogate through the unrolled Gibbs Sampler.
Algorithm \ref{alg:gibbs} in the \Appendix details our method.
Consequently, this procedure has an effect on the run-time of this technique, which we alleviate by performing Parallel Gibbs Sampling by constructing parallel splashes \citep{gonzalez2011parallel}.
Our unsupervised objective is reconstructing the noisy adjacency matrix. 

\section{CGNN Algorithms}
\label{sec:cgnnalgo}
The procedure to generate node embeddings used by the CGNN is given by Algorithm~\ref{alg:gibbs}. Structural representations are computed as an unbiased estimate of the expected value of a function of the node embedding samples as given by Algorithm~\ref{alg:struct} via a set of sets function \citep{meng2019hats}.

\begin{algorithm}[ht]
 \caption{Node Embeddings from the Unrolled Gibbs Sampler\label{alg:gibbs}
}
\SetAlgoLined
\SetKwInOut{Input}{input}
\SetKwInOut{Output}{output}
\Input{$\tens{A}$, $\bm{X}$, \textit{num-times}}
\Output{$\bm{Z}$}
 initialization: $\bm{Z}_{u} \sim \mathcal{N}(0,1)$ $\forall$ $u \in V$
 
 \While{num-times $>$ 0}{
 \For{$u \in V$}{
    $\forall v \in V$ such that $\tens{A}_{uv}=1$
    
    hidden $\gets$ $f(\{Z_v\})$; $//$ $f$ is a permutation invariant function
    
    visible $\gets$ $g(\{X_v\})$; $//$ $g$ is a permutation invariant function
    
    $Z_u$ $\gets$ \textbf{MLP} (hidden, visible, $X_u$) + \textbf{Noise} $//$ With Reparametrization Trick
    
    $//$ Equivalently, $\bm{Z_u}$ $\sim$ $P(\cdot|\{Z_v\},\{X_v\}, \{A_{uv}\}, X_u)$

  }
  num-times $\gets$ num-times - 1
 }
\end{algorithm}

\begin{figure}[t]
	\begin{center}
		\includegraphics[width=2.5in, height=1.25in]{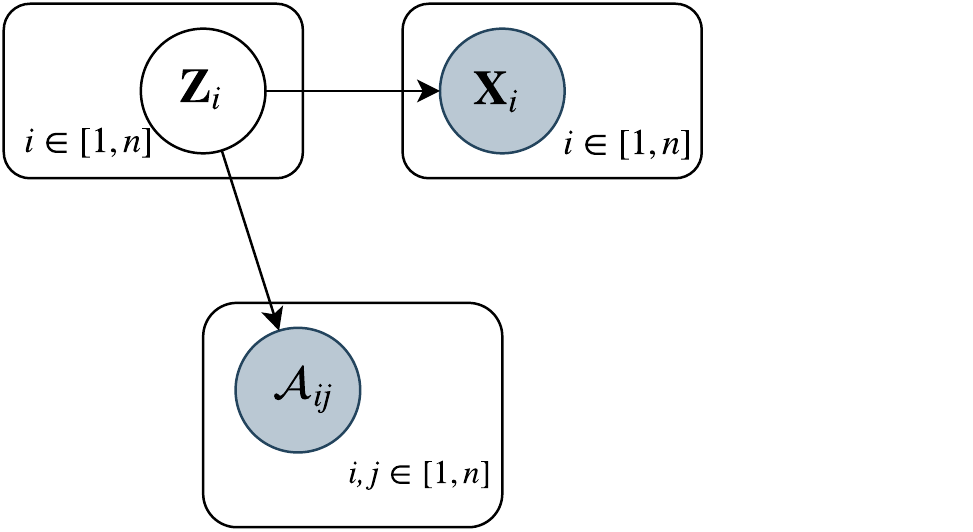}
		\vspace{-10pt}
	\end{center}
	\caption{Latent variable model for \colliders. Observed evidence variables in gray}
	\label{fig:model}
\end{figure}

\begin{algorithm}[ht]
 \caption{Structural Representations from the Node Embedding Samples \label{alg:struct}
}
\SetAlgoLined
\SetKwInOut{Input}{input}
\SetKwInOut{Output}{output}
\Input{$\{{\bf Z}^{(i)}\}_{i=1}^{m}$, $k$   ;$//$ node embedding samples, node set size} 
\Output{$g(\bm{\{{\bf Z}\}_{\cS}}$); ${\cS}=\{S_1\}_{\forall S_1 \subset V : |S_1| = k}$  $//$structural representations, ${\bf \cS}$ is a set of sets}
initialization: $g(\bm{\{{\bf Z}\}_{\cS}})$ = {\bf \{$\Vec{0}\}$}

 \For{$i \in [1, m]$}{
 \For{$S \in {\cS}$} {

   $g({\{Z_u\}_{u \in S}}$) $\gets$ $g({\{Z_u\}_{u \in S}}$) +  $\frac{1}{m}$$f(\{Z^{(i)}_u\}_{u \in S})$ $//$ $f$ is a permutation invariant function
 }
    
  }

\end{algorithm}

\section{Further Results}
In \Cref{tab:citeseer} we provide the results on node classification, link prediction and triad prediction on the Citeseer dataset.

\begin{table}[t!!!]
\vspace{0pt}
\centering
\caption{\small Micro F1 score on three distinct tasks over the Citeseer dataset, averaged over 12 runs with standard deviation in parenthesis. The number within the parenthesis beside the model name indicates the number of Monte Carlo samples used in the estimation of the structural representation. MC-SVD$^\dagger(1)$ denotes the SVD procedure run until convergence with one Monte Carlo sample for the representation. Bold values show maximum empirical average, and multiple bolds happen when its standard deviation overlaps with another average.}
\scalebox{0.75}{
\vspace{5pt}
\label{tab:citeseer}
\begin{tabular}{llll}
%
             & \multicolumn{1}{c}{Node Classification}                 & \multicolumn{1}{c}{Link Prediction}               & \multicolumn{1}{c}{Triad Prediction}   \\
\hline
{\em Random} &        0.167     &            0.500       &  0.250              \\
GIN(1)          & 0.701(0.038)    &   0.543(0.024)        &   0.309(0.009)                   \\
GIN(5)          &  0.706(0.044) &   0.525(0.040) &     0.311(0.022)                 \\
GIN(20)          & 0.718(0.034) &          0.530(0.023)        &     0.306(0.012)                 \\
RP-GIN(1)        &  0.719(0.031) &   0.541(0.034)    &     0.313(0.005)      \\
RP-GIN(5)        & 0.703(0.026)  & 0.539(0.025) &    0.307(0.013)      \\
RP-GIN(20)       & 0.724(0.020)  &  0.551(0.030) &   0.307(0.017)      \\
1-2-3 GNN(1)        &  0.189(0.026) &   0.499(0.002)    &     0.306(0.010)     \\
1-2-3 GNN(5)        &  0.196(0.042) &   0.506(0.018)    &     0.310(0.012)      \\
1-2-3 GNN(20)       &  0.192(0.029) &   0.502(0.014)    &     0.310(0.020)       \\
MC-SVD$^\dagger(1)$ & {\bf0.733(0.007)} & 0.552(0.021) & 0.304(0.011)\\
MC-SVD(1)       & {\bf 0.734(0.007)}    &    0.562(0.017)  &      0.297(0.015)                \\
MC-SVD(5)        & {\bf 0.739(0.006)}  &      0.556(0.022) &    0.302(0.009)                  \\
MC-SVD(20)     & {\bf 0.737(0.005)} &     0.565(0.020) &       0.299(0.015)               \\
CGNN(1)      & 0.689(0.010)  &    0.598(0.024)   &     0.305(0.009)                 \\
CGNN(5)      & 0.713(0.009) &      0.627(0.048)  &      0.301(0.013)                \\
CGNN(20)     & 0.721(0.008) &      {\bf0.654(0.049)}  &       0.296(0.008)              \\
\hline\relax
\end{tabular}%
}
\vspace{-10pt}
\end{table}

\section{Description of Datasets and Experimental Setup}
A detailed description of the datasets and the splits are given in Table \ref{tab:datasets}.
Our implementation is in PyTorch using Python 3.6.
The implementations for GIN and RP-GIN are done using the PyTorch Geometric Framework.
We used two convolutional layers for GIN, RP-GIN since it had the best performance in our tasks (we had tested with 2/3/4/5 convolutional layers).
Also since we perform tasks based on node representations rather than graph representations, we ignore the graph wide readout.
For GIN and RP-GIN, the embedding dimension was set to 256 at both convolutional layers.
All MLPS, across all models have 256 neurons.
Optimization is performed with the Adam Optimizer \citep{kingma2014adam}.
For the GIN, RP-GIN the learning rate was tuned in \{0.01, 0.001, 0.0001, 0.00001\} whereas for CGNN's the learning rate was set to 0.001.
Training was performed on Titan V GPU's.
For more details refer to the code provided.

\begin{table}[ht]
\caption{Summary of the datasets}
\label{tab:datasets}
\begin{threeparttable}
\begin{tabular}{lllll}
\textbf{CHARACTERISTIC}       & \textbf{CORA} & \textbf{CITESEER} & \textbf{PUBMED} & \textbf{PPI}  \\
Number of Vertices            & 2708          & 3327              & 19717           & 56944, 2373$^{a}$   \\
Number of Edges               & 10556         & 9104              & 88648           & 819994, 41000$^{a}$ \\
Number of Vertex Features     & 1433          & 3703              & 500             & 50            \\
Number of Classes             & 7             & 6                 & 3               & 121$^{b}$           \\
Number of Training Vertices   & 1208          & 1827              & 18217           & 44906$^{c}$        \\
Number of Validation Vertices & 500           & 500               & 500             & 6514$^{c}$          \\
Number of Test Vertices      & 1000          & 1000              & 1000            & 5524$^{c}$         
\end{tabular}
\begin{tablenotes}
	\small
	\item [a] The PPI dataset comprises several graphs, so the quantities marked with an ``a", represent the average characteristic of all graphs.
	\item [b] For PPI, there are 121 targets, each taking values in $\{0, 1\}$.
    \item [c] All of the training nodes come from 20 graphs while the validation and test nodes come from two graphs each not utilized during training.
\end{tablenotes}
\end{threeparttable}
\end{table}

\end{document}